\newcommand{\ranking}[1][]{%
\ifthenelse{\isempty{#1}}{\bm{r}}{\bm{#1}}%
}
\newcommand{\topset}[2]{S_{\ranking[#1]}^{#2}}
\newcommand{\rank}[1]{\mathit{rank}_{\ranking[#1]}}
\newcommand{\score}{r}
\newcommand{\fuji}{\textsc{Fuji}}
\newcommand{\jaccard}{\textsc{Jaccard}}
\newcommand{\auc}{\operatorname{AUC}}
\newtheorem{proposition}{Proposition}
\newtheorem{theorem}{Theorem}
\newcommand{\vast}{\bBigg@{4}}
\newcommand{\Vast}{\bBigg@{5}}
\newcommand{\figureBlock}[1]{%
	\begin{figure*}[!h]
		\begingroup
            \captionsetup[subfigure]{width=0.49\textwidth}
            \subfloat[Genie3 and RF rankings on \texttt{#1} data set]{
                \includegraphics[width=0.49\textwidth]{supplementaryGraphsASOC/#1Genie3-RForest}}
        \endgroup
        \begingroup
            \centering
            \captionsetup[subfigure]{width=0.49\textwidth}
            \subfloat[Relief and MI rankings on \texttt{#1} data set]{
                \includegraphics[width=0.49\textwidth]{supplementaryGraphsASOC/#1mutual_info-relief}}
        \endgroup
    \caption{Similarity of the rankings for the data set \texttt{#1}. We make comparisons of (a) the ensemble-based rankings, and (b) Relief and MI rankings.}
    \end{figure*}
}
\journal{Applied Soft Computing}
\begin{document}

\begin{frontmatter}

\title{Fuzzy Jaccard Index: \\A robust comparison of ordered lists}


\author[ijs,mps]{Matej Petkovi\'{c}\corref{corresponding}}
\ead{matej.petkovic@ijs.si}
\author[ijs,mps]{Bla\v{z} \v{S}krlj}
\ead{blaz.skrlj@ijs.si}
\author[ijs,mps]{Dragi Kocev}
\ead{dragi.kocev@ijs.si}
\author[ijs,cam]{Nikola Simidjievski}
\ead{ns779@cam.ac.uk}


\cortext[corresponding]{Corresponding author: matej.petkovic@ijs.si, +38614773635}

\address[ijs]{Department of Knowledge Technologies, Jo\v{z}ef Stefan Institute, Jamova 39, Ljubljana, Slovenia}
\address[mps]{International Postgraduate School Jo\v{z}ef Stefan, Jamova 39, Ljubljana, Slovenia}
\address[cam]{Department of Computer Science and Technology, University of Cambridge, 15 JJ Thomson Ave, Cambridge, United Kingdom}




\begin{abstract}
We propose Fuzzy Jaccard Index (\fuji{}) -- a scale-invariant score for similarity assessment of two ranked/ordered lists. \fuji{} improves upon the Jaccard index by incorporating a membership function that takes into account the particular ranks, thus producing both more stable and more accurate similarity estimates. We provide theoretical insights into the properties of the \fuji{} score as well as propose an efficient algorithm for computing it. We also present empirical evidence of its performance in different synthetic scenarios. Finally, we demonstrate its utility in a typical machine learning setting -- comparing feature ranking lists, relevant to a given machine learning task. In many practical applications, in particular originating from high-dimensional domains, where only a small percentage of the whole feature space might be relevant, a robust and confident feature ranking leads to interpretable findings, efficient computation and good predictive performance. In such cases, \fuji{} correctly distinguishes between existing feature ranking approaches, while being more robust and efficient than the benchmark similarity scores.
\end{abstract}

\begin{keyword}
ordered lists \sep fuzzy scores \sep feature ranking \sep information retrieval \sep Jaccard index

\end{keyword}

\end{frontmatter}


\section{Introduction}
\label{sec:intro}

A set similarity score quantifies the discrepancy between two nonempty sets. The utility of these similarity scores has been demonstrated in various applications related to information retrieval \cite{Niwattanakul2013,Li2012,Zhang2012}, recommender systems \cite{GAN2013811,info-fusion:ranking-products}, gene selection \cite{OrderedListBioConductor} etc. In the context of practical applications in machine learning and data mining, such scores pertain to different aspects of data pre-processing \cite{Nogueira}, method design \cite{info-fusion:movie-ranking,info-fusion:web-results} and evaluation \cite{pog,nPOG}. While there is a plethora of different similarity scores used for various tasks and settings \cite{EncyclopediaofDistances}, here we focus on set-similarity scores that operate on ordered sets i.e. lists. Their function is systematically described several properties~\citep{Nogueira} that determine whether a particular score:

\begin{enumerate}[label=(\roman*)]
\item is \textbf{fully defined} for all non-empty lists, 
\item is \textbf{bounded}, i.e., does the score take values only from a finite interval $[a, b]\subset\mathbb{R}$,
\item achieves \textbf{maximum} \emph{iff} the two lists are equal, 
\item includes \textbf{correction for chance}, i.e., what is the expected score value for two randomly ordered lists, and 
\item is a \textbf{monotone} function on the intersection of the lists.
\end{enumerate}

A typical approach for determining set-similarity is the Jaccard Index \cite{Jaccard1901}, one of the most widely used similarity measures for sets employed across a variety of domains. In a machine learning setting it is typically used to measure the stability of feature rankings \cite{Khoshgoftaar2013,Saeys2008}, or as part of a heuristic for feature selection \cite{Zou2016}, etc. However, despite its popularity, the standard Jaccard Index is often unstable and does not have a mechanism for correction of chance. Other, more sophisticated, measures that attempt to address the limitations of Jaccard Index include: POG \cite{pog} and its normalized counterpart nPOG \cite{nPOG}, Hamming \cite{hamming}, Kuncheva \cite{kuncheva}, Lustgarten \cite{Lustgarten}, Wald \cite{Wald}, Pearson \cite{Nogueira}, Kr\'{i}zek \cite{Krizek}, CW$_\text{rel}$ \cite{cwrel}, Fuzzy (Goodman and Kruskal's) gamma coefficient \cite{Boucheham2014,Henzen2015} as well as ordinary correlation based on the ranking scores. However, all of these methods, with the exception of correlation, completely ignore the ranking scores and rely solely on the ordering. This often makes them often unstable and unable to accurately detect apparent similarities.

In this paper, we address these shortcomings and propose an improved score -- the Fuzzy Jaccard Index (\fuji{}). \fuji{} builds upon the Jaccard index by incorporating a membership function that takes into account the given ranking scores, leading to more stable and more accurate similarity estimates. We provide theoretical properties of \fuji{} and highlight its benefits on three illustrative synthetic scenarios, comparing it to other similarity scores used in practice. Moreover, we demonstrate the utility of \fuji{} in a typical machine learning setting -- comparing feature-ranking lists relevant to a given machine learning task. 

Feature ranking~\cite{Saeys2007} is a machine learning task where the goal is to obtain a list of features ordered by their relevance to a particular task \cite{Nilsson:JMLR:2007}. It is closely related to the task of feature selection \citep{Saeys2007,Stanczyk2015} that aims to find the smallest subset of relevant $k$ features, which yield accurate predictions for a particular dataset. Both tasks are a direct response to the current trend of ever-increasing amounts of data, with datasets being extremely high-dimensional, thus challenging the typical machine learning process. In practice, this is often referred to as "the curse of dimensionality". It is a major issue for many machine learning algorithms in general, especially when applied to tasks such as biomarker discovery \citep{He2010,Statnikov2005,Xia2013} -- where even though there are many available features only a small subset is relevant \cite{abeel2009robust,skrlj2021reliefe}. Moreover, in addition to performing feature selection,  feature ranking algorithms also provide explanations of the model and its decisions \cite{info-fusion:xai}, particularly useful for post-hoc interpretability analysis in various supervised~\cite{Breiman2001,Chen2016} and unsupervised~\cite{Petkovic2021_UFR} settings.

Even though there exist many (feature) ranking algorithms, it remains an important research question to understand how they relate to each. Moreover, a key aspect of evaluating a ranking algorithm is also estimating its stability. Both tasks amount to measuring similarities, either between rankings produced by different algorithms, or rankings produced by the same algorithm given some data variations (e.g., when performing cross-validation \cite{Boucheham2014}). In this context, we show that \fuji{} is able to provide stable and confident estimates of the similarities between different ranking algorithms. In summary, the contributions of this work are multi-fold:

\begin{enumerate}
    \item We propose Fuzzy Jaccard Index (\fuji{}) -- a scale-invariant score for similarity assessment of ordered lists. We provide theoretical insight into the properties of the \fuji{} score, and find its exact lower bounds, as well as the lower bounds for the area under the \jaccard{} and \fuji{} curves. Moreover, we find (existing) score-minimizing pairs or prove that they do not exist. Finally, we show that \fuji{} is a generalization of \jaccard{}. 
    \item We perform a systematic synthetic study that highlights the limitations of other (standard) similarity measures and demonstrate how they are resolved with \fuji{}.
    \item We present an efficient algorithm (and provide its implementation) for computing \fuji{} similarity scores. 
    \item We empirically show that \fuji{} should be preferred over \jaccard{}, or any of the other 10 similarity scores, in a wide range of real-world scenarios by comparing different feature ranking methods for various classification tasks. This includes a case study on the \texttt{genes} dataset, a collection of gene expression measures from the TCGA pan-cancer study \cite{weinstein2013cancer}.
\end{enumerate}

The code for reproducing the results from this work is available \href{https://github.com/Petkomat/fuji-score}{\color{blue}{\textbf{here}}}.

\section{Fuzzy Jaccard Index (\fuji{})}
In this section, we begin by introducing and explaining the \fuji{} score. More specifically, we describe its properties and demonstrate its utility on three synthetic scenarios, where we compare \fuji{} to other benchmark similarity scores, outlined in the previous section.

\subsection{Preliminaries}
We refer to an ordered list accompanied with ranking scores as ranking. Given a set of $n > 1$ items $x_i$, $1\leq i \leq n$, the ranking is defined by a vector $\ranking{} = (\score{}_1, \dots, \score{}_n)$, where $\score{}_i$ is a relevance score of the item $x_i$.
The rank of the item $x_i$ in the ranking $\ranking{}$ is denoted by $\rank{}(x_i)\geq 1$.

Following the notation of order statistic, we denote the $i$-th ranked item by $x_{(i)}$ and its relevance score by $\score{}_{(i)}$, thus  $\rank{}(x_{(i)}) = i$.
We assume that each score has a distinct non-negative value.

Consider two ranking sets $\ranking[r]$ and $\ranking[s]$. The standard Jaccard Index (\jaccard{}) is based on computing the similarity between subsets of the top-ranked items of $\ranking[r]$ and $\ranking[s]$. In particular, for a given a ranking $\ranking[]$ of $n$ items $x_i$, we denote these subsets as $\topset{r}{k}$, namely $\topset{r}{k} = \{x_{(i)} \mid 1 \leq i\leq k\}$, for all $k\leq n$. The \jaccard{} score is defined as
\begin{equation}
    \label{eqn:jaccard}
    \jaccard{}(\ranking[], \ranking[s], k) = \left| \topset{r}{k} \cap \topset{s}{k}\right| / \left| \topset{r}{k} \cup \topset{s}{k} \right|\text{,}
\end{equation}
where $|S|$ denotes the cardinality of the set $S$. If we introduce the membership function $\mu_S$ of a set $S$, i.e., $\mu_S(x) = 1$ if $x\in S$, and $\mu_S(x) = 0$ otherwise. Therefore the set sizes from Eq.~\eqref{eqn:jaccard} can be rewritten as
\begin{eqnarray}
    \left| \topset{r}{k} \cap \topset{s}{k}\right| &=& \sum_{x\in \topset{r}{k}\cup \topset{s}{k}} \min \{\mu_{\topset{r}{k}}(x), \mu_{\topset{s}{k}}(x) \}, \label{eqn:intersection}\\
    \left| \topset{r}{k} \cup \topset{s}{k}\right| &=& \sum_{x\in \topset{r}{k}\cup \topset{s}{k}} \max \{\mu_{\topset{r}{k}}(x), \mu_{\topset{s}{k}}(x) \}. \label{eqn:union}
\end{eqnarray}

The membership function used in the standard \jaccard{}, however, comes with several undesirable properties when comparing rankings. Namely, since \jaccard{} relies only on the ordering of the items and ignores the actual score values, in many scenarios, it is unstable and unable to accurately detect apparent similarities (and dissimilarities), as presented in Sec.~\ref{sec:examples}.

\subsection{Method Definition}

The Fuzzy Jaccard Index (\fuji{}) extends the standard \jaccard{} and overcomes many of its limitations. In particular, given a ranking $\ranking[] = (r_1, \dots, r_n)$, we define the membership function as

\begin{equation}
    \label{eqn:fuzzy-mu}
    \mu_{\topset{}{k}}^{F}(x_i)= 
    \begin{cases}
    \hphantom{ii.}1 &;\; x_i \in \topset{}{k} \\
    \score{}_i / \score{}_{(k)} &;\; x_i \notin \topset{}{k} \land \score{}_{(k)} > 0 \\
       \hphantom{ii.}0 &;\; \text{otherwise}
    \end{cases}\text{.}
\end{equation}

Note that $r_{(k)}$ (relevance of $x_{(k)}$) is the minimal relevance of the items in $\topset{}{k}$. Thus, we effectively extend the sets $\topset{}{k}$ by allowing other items with similar (but lower) relevance to $r_{(k)}$ to be considered when computing the intersection given in Eq.~\eqref{eqn:intersection}. Finally, following Eq.~\eqref{eqn:jaccard}, \eqref{eqn:intersection} and \eqref{eqn:union}, \fuji{} is defined as
\begin{equation}
    \label{eqn:fuji}
    \fuji{}(\ranking[], \ranking[s], k) = \frac{\sum_{x\in \topset{r}{k}\cup \topset{s}{k}} \min \{\mu_{\topset{r}{k}}^{F}(x), \mu_{\topset{s}{k}}^{F}(x) \}}{\sum_{x\in \topset{r}{k}\cup \topset{s}{k}} \max \{\mu_{\topset{r}{k}}^{F}(x), \mu_{\topset{s}{k}}^{F}(x) \}}
\end{equation}
As shown later, this gives more stable and more accurate similarity estimates. Note also that by computing only relative scores $r_i/r_{(k)}$ \fuji{} becomes scale-invariant.

\subsection{Area Under the \fuji{} Curve}
When comparing rankings, \fuji{} can be further used for constructing a curve, where each point $k$ of the curve refers to 

\begin{equation*}
(k, f_k) = (k, \fuji{}(\ranking[], \ranking[s], k)) ;\;  1\leq k\leq n. 
\end{equation*}

In turn, inspecting such a curve can reveal how similar two rankings are up to some arbitrary cut-off point $k_0$ (e.g., Fig.~\ref{fig:reversed}). Moreover, a more condensed approach to the said inspection is to compute the area under the constructed curve ($\auc{}$). This allows for a clearer and more intuitive comparison of different pairs of rankings, especially in cases when comparing multiple intersecting curves. Given a \fuji{} curve, computing $\auc{}_{\fuji{}}$ is straightforward: using trapezoidal rule and scaling the area to the interval $[0, 1]$ by the factor $1/(n - 1)$, i.e.,

\begin{equation}
    \label{eqn:auc}
    \auc{}_{\fuji{}}(\ranking[r], \ranking[s]) = \frac{1}{n - 1}\left(\frac{f_1 + f_n}{2} + \sum_{k = 2}^{n - 1} f_k \right)\text{.}
\end{equation}

An efficient algorithm for computing the \fuji{} curve is presented in the supplementary material (Appendix C). Efficiency is mostly gained by avoiding unnecessary calculations: First, the values $f_k$ are computed incrementally, and second, the algorithm benefits from storing the current intersection and symmetric difference of the sets of top $k$ items instead of their intersection and union.

\subsection{Synthetic Scenarios}\label{sec:examples}

{\bf Reverse rankings.} Consider two rankings $\ranking[r]$ and $\ranking[s]$ of $n = 10$ items where
\begin{eqnarray}
    \ranking[r] =& (100, 100 - \Delta, \dots, 100 - (n - 1)\Delta)\label{eqn:r2}\\
    \ranking[s] =& (100 - (n - 1)\Delta, \dots, 100 - \Delta, 100)\label{eqn:s2}
\end{eqnarray}

For large values of $\Delta$, the similarity of $\ranking[r]$ and $\ranking[s]$ should be small. Analogously, for really small values of $\Delta$ these two rankings should be considered similar. Such behavior often occurs in real-world applications, when items have (approximately) the same relevance but some randomness is included in the ranking algorithm. Fig.~\ref{fig:reversed} reveals that \fuji{} is capable of detecting this: with $\Delta$ approaching $0$, the \fuji{} values approach $1$, while being low for larger values of $\Delta$. 

In comparison to other scores, shown in Fig.~\ref{fig:manyMeasures:reversed}, we can see that none, except \fuji{}, can accurately detect that the similarity between the rankings increases with $\Delta$. This also holds for the Gamma and correlation scores, both of which still have a constant value of $-1$, even though the Gamma score is fuzzy and the correlation score takes the ranking values into account.

\begin{figure}[htb]
    \centering
    \includegraphics[width=0.6\textwidth]{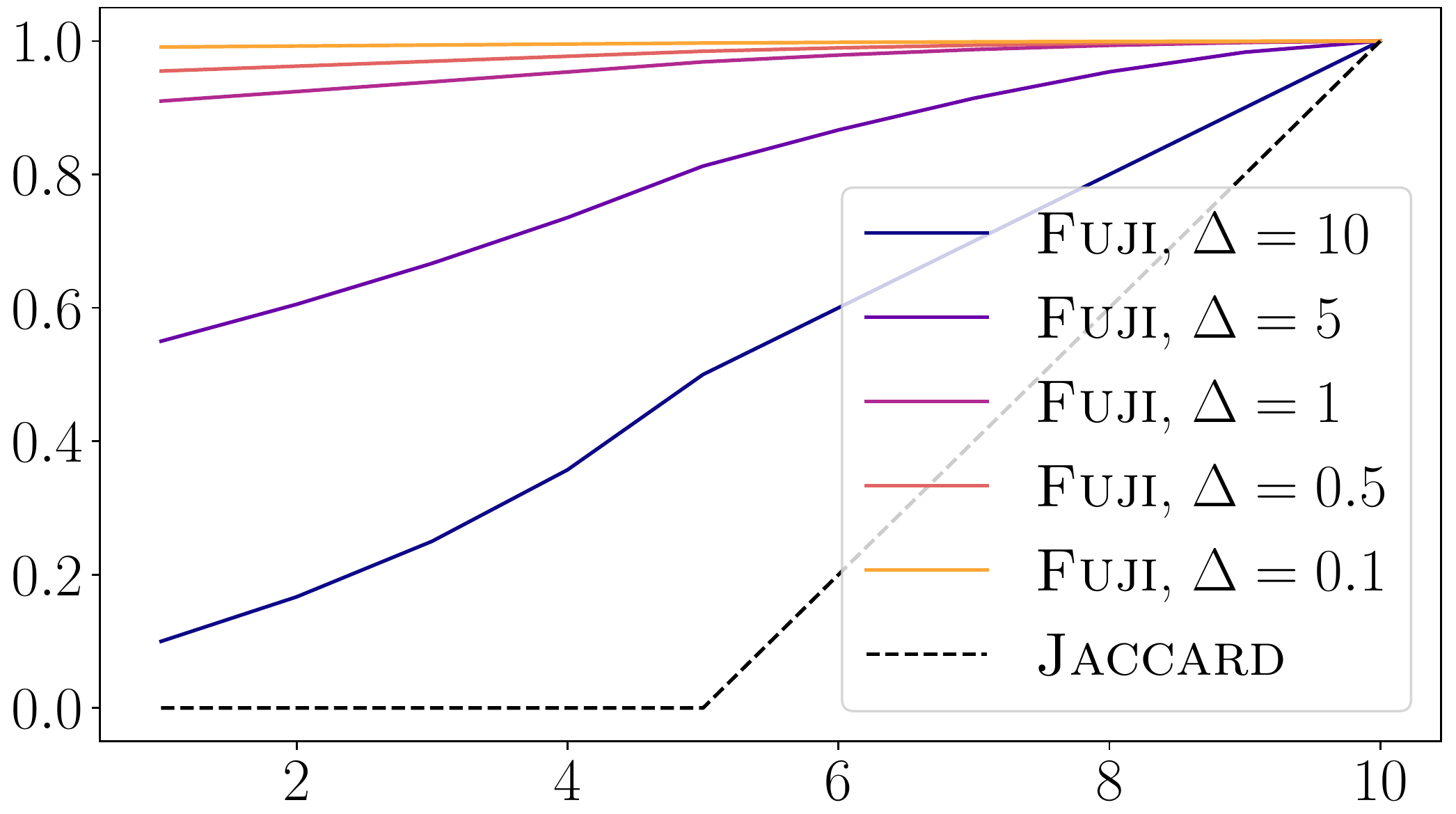}
    \caption{\fuji{} and \jaccard{} values for different numbers of top-ranked items (x-axis), for the reversed rankings $\ranking[r]$ and $\ranking[s]$ (Equations \eqref{eqn:r2} and \eqref{eqn:s2}), for different values of $\Delta$.}
    \label{fig:reversed}
\end{figure}

{\bf Correlated items.} It often happens that the data contain sets of correlated (or at least dependent) items. For instance, \cite{Yan2015} study the stability of recursive feature elimination methods for the gas-sensor data. Items from each ranking, therefore, have similar relevance but may be assigned (slightly) different scores, caused by implicit noise in the data or a random component in the algorithm itself. We simulate this scenario by considering two rankings of $n = 10$ items that come in pairs. The pairs have the same positions in both sets, however, the items in each pair appear in the opposite order:

\begin{eqnarray}
    \ranking[r] = (100, 100 - \Delta, 80, 80 - \Delta, \dots, 20, 20 - \Delta)\hphantom{iii}\label{eqn:r1}\\
    \ranking[s] = (s_1, \dots, s_n),\; \text{where} \; s_{2i} = r_{2i - 1}, s_{2i - 1} = r_{2i}\hphantom{ii} \label{eqn:s1}
\end{eqnarray}

\begin{figure}[htb]
    \centering
    \includegraphics[width=0.6\textwidth]{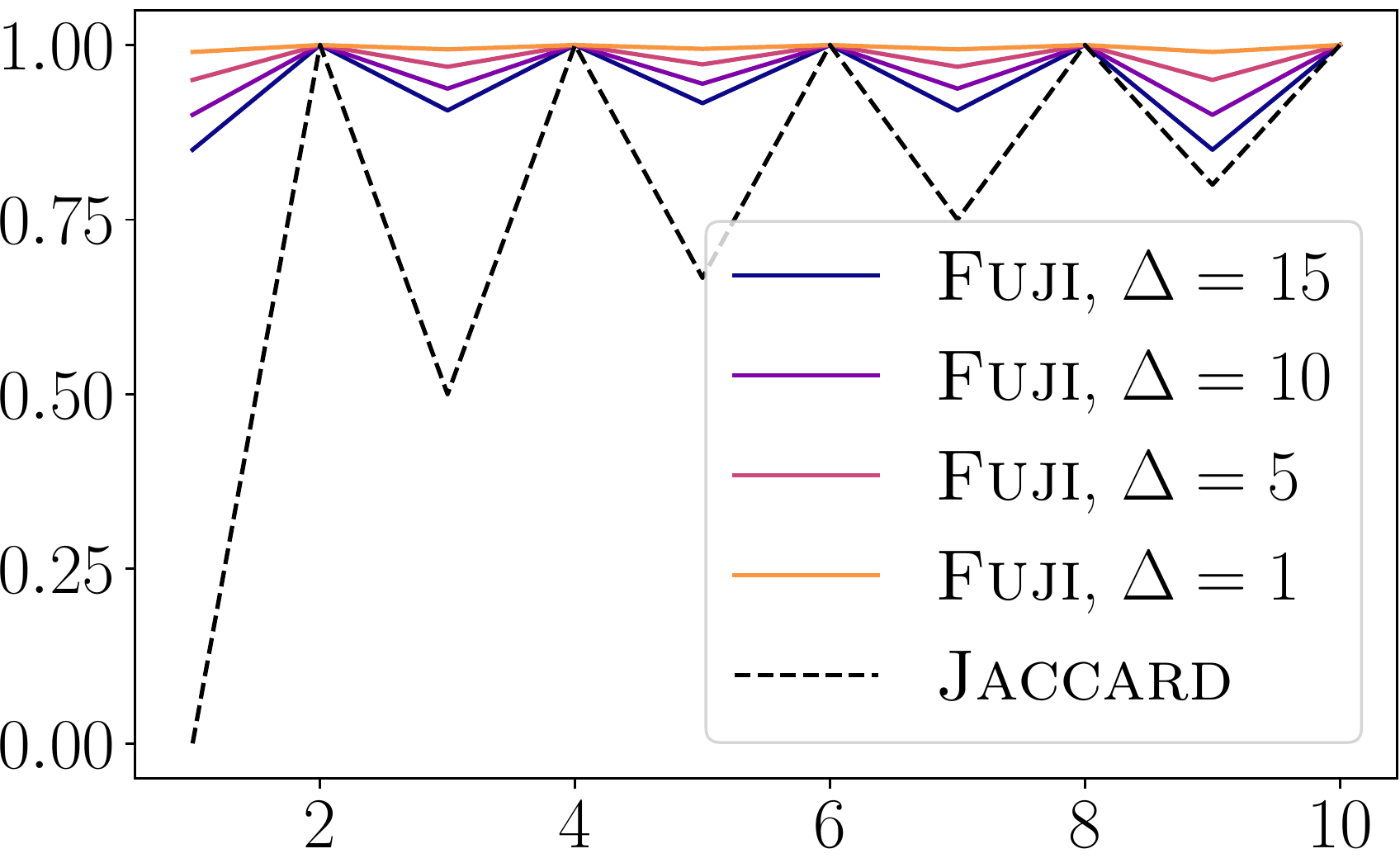}
    \caption{\fuji{} and \jaccard{} values for different numbers of top-ranked items (x-axis), for the rankings $\ranking[r]$ and $\ranking[s]$ (Equations \eqref{eqn:r1} and \eqref{eqn:s1}) of correlated items, for different values of $\Delta$.}
    \label{fig:transpositions}
\end{figure}

\begin{figure*}[htb!]
\centering
\begingroup
    \captionsetup[subfigure]{width=0.47\textwidth}
    \subfloat[\label{fig:manyMeasures:reversed}Similarity of the rankings from Eq.~\eqref{eqn:r2} and Eq.~\eqref{eqn:s2} (reversed).]{
    \includegraphics[width=0.7\textwidth]{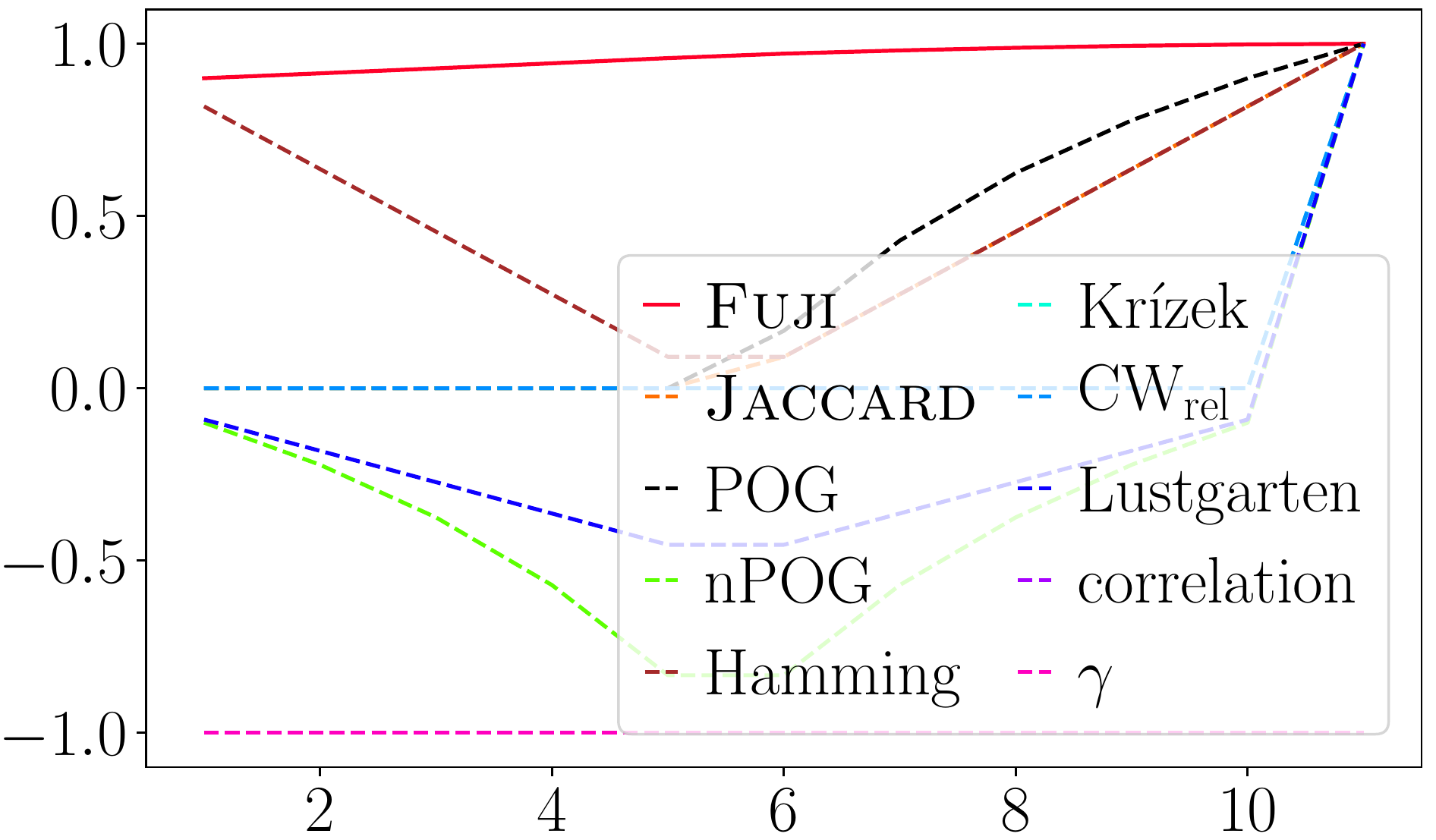}}
\endgroup
\begingroup
    \centering
    \captionsetup[subfigure]{width=0.47\textwidth}
    \subfloat[\label{fig:manyMeasures:transposition}Similarity of the rankings from Eq.~\eqref{eqn:r1} and Eq.~\eqref{eqn:s1} (correlated).]{
    \includegraphics[width=0.7\textwidth]{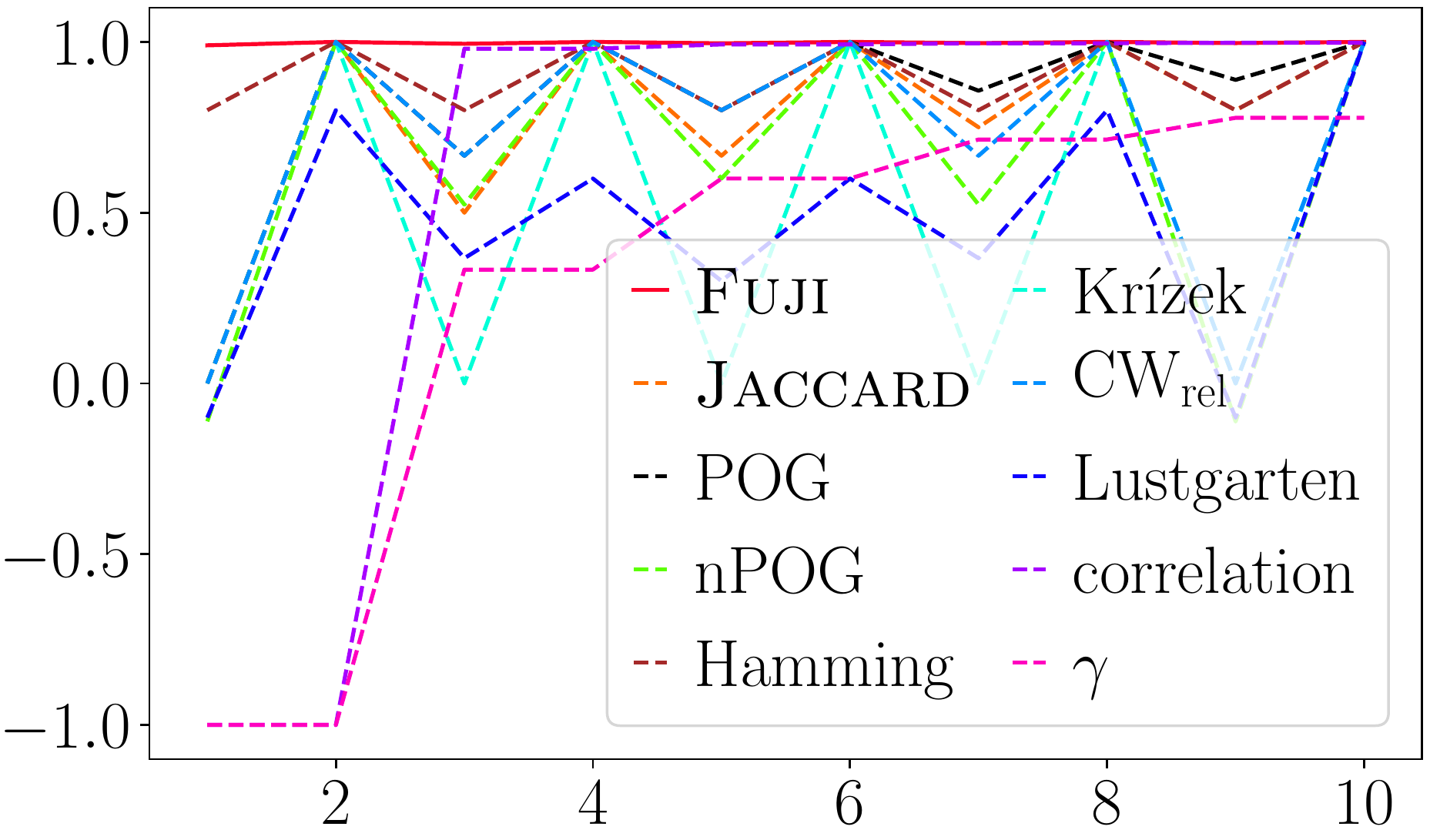}}
\endgroup

\caption{Values of the considered similarity measures for different numbers of top-ranked items (x-axis), for (a) the reversed rankings, and (b) rankings of
correlated items. For readability, we only show the curves for $\Delta = 1$. The legend applies to both graphs. When the rankings have the same length, nPOG, Kuncheva, Wald, and Pearson are guaranteed to return the same curves, therefore, only nPOG curve is shown.}
    \label{fig:manyMeasures}
\end{figure*}


As we can see in Fig.~\ref{fig:transpositions}, the \jaccard{} score is unstable and unreliable. Independently of the $\Delta$ parameter, it increases from $0$ at $k = 1$ to $1$ at $k = 2$, then it decreases to $0.5$ at $k = 3$ etc. The \fuji{} score, on the other hand, is more stable and correctly detects a mostly similar ordering of the items, especially in cases where the $\Delta$ values are low. The comparison of \fuji{} with the other similarity measures is shown in Fig.~\ref{fig:manyMeasures:transposition}. For readability, we only show the results for $\Delta = 1$.

Once again, none of the other benchmark scores can detect these similarities, because they only consider the beginning of the rankings (up to 2 items). For example, the Gamma score is quite unstable: it equals $-1$ at the beginning, gradually increasing to 0.77 by the end. On the other hand, in the case of correlation, this is only the case before the inclusion of the third item. Note that, \fuji{} and correlation are also the most stable methods which further highlights the benefit of taking the ranking scores into account.

{\bf Two-part ranking.} Consider three rankings $\ranking[r]$, $\ranking[s]$ and $\ranking[t]$ that rank the items as follows:
\begin{eqnarray*}
    \ranking[r]: && x_1, x_2, \dots, x_n \label{eqn:ranking-r}\\
    \ranking[s]: && x_{n/2 + 1}, x_{n/2 + 2},  \dots , x_n,\; x_1, x_2, \dots, x_{n/2}  \\
    \ranking[t]: && x_{n}, x_{n - 1},\dots, x_{1}\label{eqn:ranking-t}
\end{eqnarray*}
(we assume that $n$ is an even number).
Note that ranking $\ranking[t]$ has the reverse order of ranking $\ranking[r]$, and that ranking $\ranking[s]$ is obtained from $\ranking[r]$ by dividing it into two parts of length $n/2$ and reversing their order. As a consequence, many relative positions of the items remain the same.

Direct computation shows that $\jaccard{}(\ranking[r],\ranking[s], k) = \jaccard{}(\ranking[r],\ranking[t], k)$, for all $k$,
even though, as presented later in Proposition \ref{pro:jaccard:min}, ranking $\ranking[t]$ is as distant from $\ranking[r]$ as possible.
The same holds for the other similarity scores that do not take ranking scores into account.
In contrast, in the case of \fuji{} one can show that $\auc{}_{\fuji{}}(\ranking[r],\ranking[s]) > \auc{}_{\fuji{}}(\ranking[r],\ranking[t])$. Note that, this also holds for correlation.

\section{Properties}

We consider the following five properties of a similarity score~\cite{Nogueira}:
1) fully defined, 2) boundedness, 3) maximum, 4) correction for chance and 5) utility of ranking scores. The characterization of \fuji{} and the other considered similarity scores, in terms of these properties, is given in Tab.~\ref{tab:properties}.

Although the properties, presented in Section\ref{sec:intro}, are intuitively desirable, the illustrative scenarios in Sec.~\ref{sec:examples} show that taking ranking scores into account is a very beneficial property when comparing ordered lists. Since this may contradict the monotonicity property, we omit monotonicity from the analysis. Note that, all the proofs of the properties of the benchmark scores, except correlation, are presented in \cite{Nogueira}. The proofs for the properties of correlation are given in Appendix B.

\begin{table*}[ht]
    \centering
    \caption{The properties of the similarity scores. ($^*$) Correction for chance in the case of \fuji{}  and \jaccard{}
    is further discussed in Sec.~\ref{sec:correction-for-chance}.}\label{tab:properties}
    \resizebox{\textwidth}{!}{
    \begin{tabular}{l | c c c c c}
    \hline
         Score & Fully defined & Bounded & Maximum & Correction for chance &  Rank scores \\
         \hline
         \hline
        \textbf{\fuji{}} & \checkmark  & \checkmark & \checkmark & \checkmark$^*$ & \checkmark \\
    \hline    \jaccard{} & \checkmark & \checkmark & \checkmark & &  \\
   \hline     Hamming  & \checkmark  & \checkmark &  \checkmark &  & \\ 
   \hline     POG  & \checkmark & \checkmark & \checkmark & & \\ 
    \hline    nPOG & \checkmark & & \checkmark & \checkmark \\ 
    \hline    Kuncheva &  & \checkmark & \checkmark & \checkmark \\ 
   \hline     Lustgarten & \checkmark  & \checkmark & & \checkmark\\ 
    \hline    Wald & \checkmark  & & & \checkmark \\
   \hline     Kr\'{i}zek  &  &  & \checkmark & \\
  \hline      $\text{CW}_\text{rel}$ & \checkmark  & \checkmark & & \\
  \hline      correlation & & \checkmark &  & \checkmark & \checkmark \\
  \hline      Pearson & \checkmark & \checkmark &  \checkmark & \checkmark & \\
    \hline    Gamma ($\gamma$) &  & \checkmark & \checkmark &  &  \\
        \hline
    \end{tabular}
    }
\end{table*}

\subsection{Fully defined}

Since the experiments (and illustrative examples) that we carried out to show the appropriateness of \fuji{} deal with ranking
and comparison of the sets of the top $k$ ranked items, the definition of \fuji{} given in Eq.~\eqref{eqn:fuji} assumes that the sets
$\topset{r}{k}$ and $\topset{s}{k}$ are of equal size $k$. However, the definition can be generalized from $\fuji{}(\ranking[r]{}, \ranking[s]{}, k)$ to $\fuji{}(\ranking[r]{}, \ranking[s]{}, k_1, k_2)$ by using the sets $\topset{r}{k_1}$ and $\topset{s}{k_2}$. Thus, $\fuji{}$ is applicable to any pair of nonempty sets.

\subsection{Boundedness and Maximum}

In this part, we find the exact lower and upper bounds and specify conditions when they are achieved. Since \fuji{} is based on \jaccard{}, we also highlight the important differences and similarities to the standard \jaccard{}. The proofs of the propositions and the theorem are given in the supplementary material, (Appendix A). To simplify the proofs and notation, we assume that we always compare two rankings of the top $k$ ranked items.

{\bf The upper bound of \fuji{} and \jaccard{}.} \fuji{} and \jaccard{} have the same exact upper bound of $1$. For \jaccard{}, this follows immediately from the definitions \eqref{eqn:jaccard} and
and the fact that $|A\cup B| \leq |A\cup B|$ for any two sets. Similarly, this follows for \fuji{} from definitions \eqref{eqn:fuzzy-mu} and \eqref{eqn:fuji}.
As a consequence, this also holds for $\auc{}$ in Eq.~\eqref{eqn:auc}.
The reverse is also true: If the sets $\topset{r}{k}$ and $\topset{s}{k}$ of the items have similarity $1$, then $\topset{r}{k} = \topset{s}{k}$,
hence $\auc{}(\ranking[r], \ranking[s]) = 1$ if and only if $\rank{r}(x_i) =\rank{s}(x_i)$ for all items $x_i$.
Hence, the maximum property is also satisfied.

{\bf The lower bounds of \fuji{} and \jaccard{}.}
Since the membership function $\mu$ is non-negative, a trivial lower bound for  $\fuji{}$ (and \jaccard{}) as well as the derived $\auc{}$ is $0$.
However, since $\fuji{}(\ranking[r], \ranking[s], n) = \jaccard{}(\ranking[r], \ranking[s], n) = 1$, $0$ is not the exact lower bound for all $k$. Next, we derive the exact lower bounds and find minimizing pairs (if they exist) of rankings for which the bound is achieved. 

\begin{proposition}
\label{pro:jaccard:min}
When using \jaccard{}, the least similar ranking to a given reference ranking $\ranking{}$ is every ranking $\ranking[s]$,
such that $\rank{s}(x_i)  = n + 1 - \rank{}(x_i)$. 
\end{proposition}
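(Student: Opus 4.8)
The claim is that the Jaccard similarity curve $k\mapsto\jaccard(\ranking{},\ranking[s],k)$ is minimized (simultaneously for every $k$, hence also for $\auc$) exactly by the "reversed" ranking $\rank{s}(x_i)=n+1-\rank{}(x_i)$. First I would reformulate everything in terms of the overlap counts: writing $a_k=\lvert\topset{r}{k}\cap\topset{s}{k}\rvert$, since both top-sets have size $k$ we have $\lvert\topset{r}{k}\cup\topset{s}{k}\rvert=2k-a_k$, so $\jaccard(\ranking{},\ranking[s],k)=a_k/(2k-a_k)$, which is a strictly increasing function of $a_k$. Hence minimizing $\jaccard$ at every $k$ is equivalent to minimizing $a_k=\lvert\topset{r}{k}\cap\topset{s}{k}\rvert$ at every $k$, and likewise $\auc$ is minimized once every $a_k$ is. So the whole proposition reduces to: among all rankings $\ranking[s]$, the reversed one simultaneously minimizes $\lvert\topset{r}{k}\cap\topset{s}{k}\rvert$ for all $k$.

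**Main steps.** The second step is a lower bound on $a_k$ that holds for any $\ranking[s]$. Relabel items so that $\ranking{}$ ranks them $x_1,\dots,x_n$, i.e. $\topset{r}{k}=\{x_1,\dots,x_k\}$. Then $a_k=\lvert\{x_1,\dots,x_k\}\cap\topset{s}{k}\rvert$. Pigeonhole gives the bound: $\topset{s}{k}$ has $k$ elements drawn from $n$ items, of which only $n-k$ lie outside $\topset{r}{k}$, so at least $k-(n-k)=2k-n$ of them must lie inside; thus $a_k\ge\max\{0,2k-n\}$. The third step is to check that the reversed ranking attains this bound for every $k$: there $\topset{s}{k}=\{x_{n-k+1},\dots,x_n\}$, so $\topset{r}{k}\cap\topset{s}{k}=\{x_j:j\le k\text{ and }j\ge n-k+1\}$, which has exactly $\max\{0,2k-n\}$ elements. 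Combining, the reversed ranking minimizes $a_k$ for all $k$, hence minimizes $\jaccard$ at all $k$ and minimizes $\auc$.

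**The converse / uniqueness direction.** The proposition says the reversed rankings are \emph{exactly} the minimizers, so I also need: if $\ranking[s]$ achieves $a_k=\max\{0,2k-n\}$ for all $k$ (equivalently, minimizes $\auc$), then $\rank{s}(x_i)=n+1-\rank{}(x_i)$. I would argue by induction downward on $k$, or by looking at the symmetric difference of consecutive top-sets. Equality in the pigeonhole bound for a given $k\ge n/2$ forces $\topset{s}{k}$ to contain \emph{all} $n-k$ items outside $\topset{r}{k}$ — i.e. $\{x_{k+1},\dots,x_n\}\subseteq\topset{s}{k}$; applying this across all such $k$ pins down which item sits in each rank of $\ranking[s]$: the item $x_n$ must be in $\topset{s}{k}$ for all $k\ge n/2$ but, for tightness at $k=n/2$ (where the bound is $0$), $\topset{s}{n/2}$ must be disjoint from $\topset{r}{n/2}=\{x_1,\dots,x_{n/2}\}$, so $\topset{s}{n/2}=\{x_{n/2+1},\dots,x_n\}$; then comparing $\topset{s}{k}$ and $\topset{s}{k-1}$ shows the newly added item at step $k$ (going up) is forced, giving $x_{(n+1-j)}^{\ranking[s]}=x_j$, which is the reversal statement. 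The one subtlety is that for $k<n/2$ the bound $a_k=0$ does not by itself constrain $\ranking[s]$ beyond disjointness, so the uniqueness really comes from the constraints at $k\ge n/2$; I should make sure the induction is anchored at $k=n$ or $k=n/2$ rather than $k=1$.

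**Expected main obstacle.** The forward (minimization) direction is an easy pigeonhole argument once the $\jaccard=a_k/(2k-a_k)$ monotonicity observation is in place. The real work is the uniqueness direction — carefully deducing from "$a_k$ is minimal for every $k$" that the entire order of $\ranking[s]$ is forced to be the reversal, handling the $k<n/2$ and $k\ge n/2$ regimes separately and making sure no other ranking sneaks through. I would present the equality-case analysis of the pigeonhole bound as the crux, phrasing it as: equality at level $k$ (for $k\ge n/2$) is equivalent to $\{x_{k+1},\dots,x_n\}\subseteq\topset{s}{k}$, and intersecting these containments over all $k$ yields exactly the reversed assignment of ranks.
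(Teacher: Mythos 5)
Your forward direction is correct, and it takes a genuinely different route from the paper. The paper proves minimality by a local exchange argument: starting from an arbitrary $\ranking[s']$, it swaps the item at rank $n$ with $x_1$, shows the intersection sizes cannot increase nor the union sizes decrease at any $k$, and then recurses on $x_2, x_3, \dots$. You instead observe that $\left|\topset{r}{k}\cup\topset{s}{k}\right| = 2k - a_k$ with $a_k = \left|\topset{r}{k}\cap\topset{s}{k}\right|$, so that $\jaccard{}$ is monotone in $a_k$, and then get the global pigeonhole bound $a_k \geq \max\{0, 2k-n\}$, attained by the reversal at every $k$ simultaneously. This is cleaner and buys something the paper's proof does not give for free: the explicit minimal curve $j_k = \max\{0,2k-n\}/(2k-\max\{0,2k-n\})$, which is exactly what the paper needs afterwards to compute $\min_{\ranking[s]}\auc{}(\ranking[r],\ranking[s]) = n/(4(n-1))$ for even $n$. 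The paper's exchange argument, on the other hand, is the one that transfers to the \fuji{} setting in Proposition 2, where no closed-form counting bound is available; your pigeonhole argument would not generalize there.

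The uniqueness direction you sketch, however, is both unnecessary and unsalvageable. The proposition asserts that every reversed ranking is a least similar one, not that every least similar ranking is reversed, and indeed the converse is false: the paper's own ``two-part ranking'' example exhibits $\ranking[s] = (x_{n/2+1},\dots,x_n,x_1,\dots,x_{n/2})$ with $\jaccard{}(\ranking[r],\ranking[s],k) = \jaccard{}(\ranking[r],\ranking[t],k)$ for all $k$, where $\ranking[t]$ is the full reversal. Your own equality analysis already reveals why the induction you propose cannot close: for $k > n/2$ the containment $\{x_{k+1},\dots,x_n\}\subseteq\topset{s}{k}$ is implied by the single constraint $\topset{s}{n/2} = \{x_{n/2+1},\dots,x_n\}$ together with $\topset{s}{n/2}\subseteq\topset{s}{k}$, and for $k < n/2$ disjointness is likewise automatic; so equality at every $k$ only pins down $\topset{s}{n/2}$ as a set and leaves the order within each half completely free. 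The claim that ``the newly added item at step $k$ is forced'' is therefore wrong. Drop the converse entirely; the forward direction alone proves the proposition as stated.
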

The property $\rank{s}(x_i)  = n + 1 - \rank{}(x_i)$ is actually the precise definition of the reversed pair of rankings, presented in Sec.~\ref{sec:examples}.
The proof of the proposition also shows that no point of any $\jaccard{}$ curve can lay strictly below the curve that is computed from the two reversed rankings.

\begin{proposition}\label{pro:fuji:min}
Let $S$ be the set of rankings generated by all permutations of the fixed scores $s_{(i)}$, and $\ranking{}$ an arbitrary ranking.
To the ranking $\ranking[]$ least similar element of $S$, as measured by \fuji{}, is the
ranking $\ranking[s]$, such that $\rank{s}(x_i)  = n + 1 - \rank{}(x_i)$. 
\end{proposition}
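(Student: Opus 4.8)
Unlike the \jaccard{} case, the reversed curve is not pointwise‑minimal for \fuji{}: being a ratio of fuzzy set sizes, \fuji{} is not monotone under the natural rearrangements, and other curves can locally dip below the reversed one (an $n=6$ example already exhibits this). I would therefore read ``least similar'' through $\auc{}_{\fuji{}}$, as in the two‑part ranking example, and show $\auc{}_{\fuji{}}(\ranking[r],\ranking[s])\ge\auc{}_{\fuji{}}(\ranking[r],\ranking[t])$ for every $\ranking[s]\in S$, where $\ranking[t]$ is the reversal $\rank{t}(x_i)=n+1-\rank{}(x_i)$. Relabel the items so that $\rank{}(x_i)=i$ and $r_1>\dots>r_n\ge 0$; an element of $S$ is then a permutation $\sigma$ placing the fixed score $s_{(\sigma(i))}$ on $x_i$, and I write $s_x$ for the $\ranking[s]$‑score of $x$. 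Since $\fuji{}(\ranking[r],\ranking[s],n)=1$ for all $\sigma$, this reduces to minimising $\tfrac12 f_1+\sum_{k=2}^{n-1}f_k$, where $f_k=\fuji{}(\ranking[r],\ranking[s],k)$.

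The first real step is the closed form obtained by unpacking the membership function: items of $\topset{r}{k}\cap\topset{s}{k}$ add $1$ to numerator and denominator, items of the symmetric difference add their relative score to the numerator and $1$ to the denominator, and the denominator is $|\topset{r}{k}\cup\topset{s}{k}|$; with $m_k=|\topset{r}{k}\cap\topset{s}{k}|$ and $d_k=k-m_k$ this gives $f_k=(m_k+A_k+B_k)/(2k-m_k)$ with $A_k=s_{(k)}^{-1}\sum_{x\in\topset{r}{k}\setminus\topset{s}{k}}s_x$ and $B_k=r_{(k)}^{-1}\sum_{x\in\topset{s}{k}\setminus\topset{r}{k}}r_x$. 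Then I would decouple the overlap from the score placement: for fixed $k$ and $d_k$, the $\ranking[s]$‑scores of the $d_k$ items of $\topset{r}{k}\setminus\topset{s}{k}$ lie among the $n-k$ smallest scores, so $A_k\ge\tau(d_k)/s_{(k)}$ where $\tau(d)=s_{(n-d+1)}+\dots+s_{(n)}$, and symmetrically $B_k\ge\rho(d_k)/r_{(k)}$ where $\rho(d)=r_{(n-d+1)}+\dots+r_{(n)}$; hence $f_k\ge\phi_k(d_k):=\bigl(k-d_k+\rho(d_k)/r_{(k)}+\tau(d_k)/s_{(k)}\bigr)/(k+d_k)$. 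A direct description of $\topset{t}{k}$ for the reversal shows it is the bottom‑$k$ set of $\ranking[r]$, so $m_k=\max(0,2k-n)$ is minimal, $d_k=\min(k,n-k)$ is maximal, and the items of $\topset{r}{k}\setminus\topset{t}{k}$ are exactly those $\ranking[t]$ ranks last and hence carry the $d_k$ smallest scores; all the inequalities above become equalities for $\ranking[t]$, so $f_k(\ranking[t])=\phi_k\bigl(\min(k,n-k)\bigr)$.

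For any $\sigma$ the sequence $(d_1,\dots,d_n)$ is a lattice path: $d_1\in\{0,1\}$, $d_n=0$, $d_{k+1}-d_k\in\{-1,0,1\}$, and $0\le d_k\le\min(k,n-k)$; the reversal realises the pointwise largest admissible path $d^{\ast}$ (the ``tent''). Combining the two steps, the proposition reduces to the combinatorial inequality $\tfrac12\phi_1(d_1)+\sum_{k=2}^{n-1}\phi_k(d_k)\ge\tfrac12\phi_1(d^{\ast}_1)+\sum_{k=2}^{n-1}\phi_k(d^{\ast}_k)$ over all admissible paths, and this is the step I expect to be the main obstacle. The planned attack is induction on $\sum_k(d^{\ast}_k-d_k)$: take the largest $k_0$ with $d_{k_0}<d^{\ast}_{k_0}$, lift the maximal valley of the path containing $k_0$ up to the tent, and check the objective does not increase. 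The difficulty is genuine --- $\phi_k$ is \emph{not} monotone in $d_k$ (it can be strictly smaller at an interior value than at $\min(k,n-k)$), so coordinates cannot be lifted one at a time --- and the step has to weigh the gap $1-\phi_k(d)$ forced on the lower indices that a valley necessarily drags down (substantial there because $\rho,\tau$ are convex and the order statistics $r_{(k)},s_{(k)}$ decrease in $k$) against any local dip of $\phi_k$ at higher indices; at this point I would in fact re‑examine the statement itself, since the whole content is precisely whether the path constraints always rescue the aggregate bound, which is far from automatic. I would also verify that the four path constraints are exactly those realised by permutations, so that nothing is lost in the reduction.
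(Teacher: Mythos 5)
Your opening claim --- that the reversed ranking is \emph{not} pointwise minimal for \fuji{} within $S$ --- is correct, and it is worth making concrete because it also contradicts the paper's own proof. Take $n=4$, $\ranking[r]=(1.1,\,1,\,0.99,\,0.98)$ and the score multiset $\{1.1,\,1,\,0.99,\,0.5\}$ for the second ranking. The reversal assigns $x_1\mapsto 0.5$, $x_2\mapsto 0.99$, $x_3\mapsto 1$, $x_4\mapsto 1.1$ and gives $\fuji{}=(0.5+0.99+0.99+0.98)/4=0.865$ at $k=2$, whereas the permutation $x_1\mapsto 1.1$, $x_2\mapsto 0.5$, $x_3\mapsto 1$, $x_4\mapsto 0.99$ gives $(1+0.5+0.99)/3=0.83<0.865$ at $k=2$. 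The paper proves the proposition by the same exchange argument as for \jaccard{}: swap $x_1$ with the item $x_\ell$ currently ranked last and claim that at every $k$ the fuzzy intersection does not increase while the union does not decrease. That claim fails exactly in the case $k\geq k_0$ with $x_\ell$ outside $\topset{r}{k}\cup\topset{s'}{k}$: the swap replaces a contribution of $1$ (from $x_1$ sitting in the crisp intersection) by $s_{(n)}/s_{(k)}+r_{(\ell)}/r_{(k)}$, which can exceed $1$, and the example shows the ratio itself can drop. So the pointwise reading of the proposition is false, the paper's argument does not survive the passage from crisp to fuzzy memberships, and your retreat to $\auc{}_{\fuji{}}$ is the only tenable reading of ``least similar''.

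Your own argument, however, does not reach a proof either. The closed form $f_k=(m_k+A_k+B_k)/(2k-m_k)$, the bound $f_k\geq\phi_k(d_k)$ with equality for the reversal, and the characterisation of admissible sequences $(d_k)_k$ as lattice paths are all sound. But the resulting path inequality is the entire content of the statement and you leave it open: as you yourself observe, $\phi_k$ is not monotone in $d$ (in the example above $\phi_2(1)\approx 0.827<\phi_2(2)=0.865$), so the valley-lifting induction has no local invariant, and the trade-off you describe --- the gap $1-\phi_k(d)$ forced on low indices versus the dip permitted at high indices --- is precisely what needs to be quantified and is not. There is the further structural risk that the reduction is too lossy: $f_k\geq\phi_k(d_k)$ discards which items realise the symmetric difference at each $k$, so the path-level inequality is strictly stronger than the proposition and could fail even if the proposition is true, in which case your whole framework would have to be abandoned rather than repaired. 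As it stands, neither the paper's exchange argument nor your path reduction establishes the result; what your proposal does establish is that any correct proof must aggregate over $k$ rather than argue pointwise, and that the proposition requires either a corrected proof or a restated claim.
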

This is the strongest obtainable result, since  $\inf_{\ranking[s]} \auc{}_{\fuji{}}(\ranking[], \ranking[s])$ is not achieved
if the scores $s_{(i)}$ are not fixed, as evident from (the proof of) the Proposition \ref{pro:fuji:argmin}.
If the scores are not fixed, we can also find the lists $\ranking[s]$ and $\ranking[t]$, such that i) $\rank{r}(x_i) = n + 1 -\rank{s}(x_i)$, for all
$i$, ii) $\rank{r}(x_i) \neq n + 1 -\rank{t}(x_i)$ for some $i$, and iii) rankings $\ranking[r]$ and $\ranking[s]$ are more similar than rankings  $\ranking[r]$ and $\ranking[t]$. We can prove this constructively and find the smallest example for this. Let $\ranking[r] = (1, 1/2, 1/3)$, $\ranking[s] = (1 - 2\varepsilon, 1 - \varepsilon, 1)$ and $\ranking[t] = (\alpha, 1, \alpha^2)$. One can verify that
\begin{eqnarray*}
    \lim_{\varepsilon \searrow 0} \fuji{}(\ranking[r], \ranking[s], 1) = \frac{2}{3} &\text{and}& \lim_{\varepsilon \searrow 0} \fuji{}(\ranking[r], \ranking[s], 2) = \frac{8}{9}\\
    \lim_{\alpha \nearrow \infty} \fuji{}(\ranking[r], \ranking[t], 1) = \frac{1}{6} &\text{and}& \lim_{\alpha  \nearrow \infty} \fuji{}(\ranking[r], \ranking[t], 2) = \frac{5}{9}\\
\end{eqnarray*}
therefore there exist $\varepsilon$ small enough and $\alpha$ big enough, such that assumptions i)--iii) hold.

{\bf The lower bound for AUC.}
Given Proposition \ref{pro:jaccard:min}, it is clear that the minimal $\auc{}$ value of a \jaccard{} curve exists. By explicitly computing the values $j_k$ from the corresponding proof and applying formula \eqref{eqn:auc}, we obtain
$$
\min_{\ranking[s]}\auc{}(\ranking[r], \ranking[s]) = \frac{1}{4(n - 1)}\cdot
\begin{cases}
(n^2 + 1) / n &;\; n\text{ is odd}\\
\hphantom{iiiiii}n &;\; n\text{ is even}
\end{cases}
$$
which converges to $1/4$ when $n\to\infty$.

In the above computation, a crucial point is that \jaccard{} ignores the actual (relevance) ranking scores. The case of \fuji{} taking into account these scores brings us to the following result.

\begin{proposition}\label{pro:fuji:argmin}
In the case of \fuji{}, the minimizing pair of rankings does not exist for any $k < n$.
\end{proposition}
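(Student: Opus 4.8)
The plan is to reduce $\fuji{}(\ranking[r],\ranking[s],k)$ to a transparent closed form and then show that, for a fixed reference $\ranking[r]$ and any competitor $\ranking[s]$ (the minimization being over $\ranking[s]$, as in Propositions~\ref{pro:jaccard:min} and~\ref{pro:fuji:min}), there is always a strictly improving perturbation that leaves the top-$k$ structure untouched. Fix $k<n$ and set $m=|\topset{r}{k}\cap\topset{s}{k}|$, so that $|\topset{r}{k}\setminus\topset{s}{k}|=|\topset{s}{k}\setminus\topset{r}{k}|=k-m$. Since $k<n$, the $k$-th largest scores $r_{(k)}$ and $s_{(k)}$ are strictly positive, so every item lying outside a top-$k$ set falls under the second branch of~\eqref{eqn:fuzzy-mu}. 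Splitting the two sums of~\eqref{eqn:fuji} over the regions $\topset{r}{k}\cap\topset{s}{k}$, $\topset{r}{k}\setminus\topset{s}{k}$, $\topset{s}{k}\setminus\topset{r}{k}$, and noting that every maximum in the denominator equals $1$, I obtain
\begin{equation*}
\fuji{}(\ranking[r],\ranking[s],k)=\frac{m+A+B}{2k-m},\qquad A:=\sum_{x\in\topset{r}{k}\setminus\topset{s}{k}}\frac{s_x}{s_{(k)}},\qquad B:=\sum_{x\in\topset{s}{k}\setminus\topset{r}{k}}\frac{r_x}{r_{(k)}},
\end{equation*}
with denominator $2k-m\ge k>0$. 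Because $0<k<n$, some competitor has $\topset{s}{k}\neq\topset{r}{k}$, and for it the numerator is strictly below the denominator, so $\inf_{\ranking[s]}\fuji{}(\ranking[r],\ranking[s],k)<1$; consequently any minimizer must satisfy $\topset{r}{k}\setminus\topset{s}{k}\neq\emptyset$.

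I would then produce the improving perturbation. Given $\ranking[s]$ and $\lambda\in(0,1)$, let $\ranking[s']$ keep the scores of the $k$ items of $\topset{s}{k}$ and replace every other score by $\lambda$ times itself. Then $\ranking[s']$ is still a valid ranking: its scores stay distinct and non-negative, and the rescaled block stays strictly below $s_{(k)}$, so $\topset{s'}{k}=\topset{s}{k}$ with all ranks, and the threshold $s_{(k)}$, unchanged. Hence $\mu_{\topset{s'}{k}}^{F}$ coincides with $\mu_{\topset{s}{k}}^{F}$ on $\topset{s}{k}$ and equals $\lambda\,\mu_{\topset{s}{k}}^{F}$ off it. Substituting into the closed form, $m$, $B$ and the denominator are unchanged while $A$ is multiplied by $\lambda$, so $\fuji{}(\ranking[r],\ranking[s'],k)=(m+\lambda A+B)/(2k-m)$, which is strictly decreasing in $\lambda$ provided $A>0$. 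At a minimizer $\topset{r}{k}\setminus\topset{s}{k}\neq\emptyset$, hence $A>0$, and a sufficiently small $\lambda$ yields a strictly smaller value --- a contradiction. Therefore no minimizer exists; equivalently $\inf_{\ranking[s]}\fuji{}(\ranking[r],\ranking[s],k)$ equals the $\lambda\searrow0$ limit of $(m+B)/(2k-m)$ over the combinatorially optimal choice of the $k$-set $\topset{s}{k}$, and is never attained. I would add that the same rescaling pushes $f_k$ down at every $k<n$ at once, so $\inf_{\ranking[s]}\auc{}_{\fuji{}}(\ranking[r],\ranking[s])$ is also not attained once the scores are free; the family $\ranking[t]=(\alpha,1,\alpha^2)$ with $\alpha\nearrow\infty$ discussed above is exactly such a minimizing sequence, which is why Proposition~\ref{pro:fuji:min} is sharp only when the scores $s_{(i)}$ are held fixed.

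The one place requiring care --- and what I would flag as the only genuine obstacle --- is strict positivity of the scores: if a single zero score is admitted and it happens to sit on the unique element of $\topset{r}{k}\setminus\topset{s}{k}$, then $A=0$ already and the rescaling is inert (indeed for $k=1$ this can make the infimum attained). I would therefore carry out the proof under the (innocuous, and automatic in the feature-ranking applications) assumption that all relevance scores are strictly positive, which forces $A>0$ whenever $\topset{r}{k}\setminus\topset{s}{k}\neq\emptyset$, so that the perturbation argument applies verbatim.
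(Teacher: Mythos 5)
Your proof is correct and takes a genuinely different route from the paper's. The paper first invokes Proposition~\ref{pro:fuji:min} to reduce any putative minimizer to a reversed pair with monotone scores, writes $f_k$ for that special configuration in terms of partial-sum ratios $G(\ranking[t],a,b,c)$, and then recursively constructs scores driving every $G$ below an arbitrary $\varepsilon$, so that any reversed pair is beaten by another reversed pair. You instead derive the closed form $\fuji{}(\ranking[r],\ranking[s],k)=(m+A+B)/(2k-m)$ valid for \emph{any} pair and defeat a putative minimizer with a single local perturbation --- rescaling the scores of $\ranking[s]$ below the threshold $s_{(k)}$ by $\lambda<1$ --- which needs no appeal to Proposition~\ref{pro:fuji:min} and no global construction; the only point to verify is that the rescaled scores stay distinct and strictly below the unchanged $s_{(k)}$, which holds. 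What the paper's route buys is the explicit formula for the reversed pair, reused immediately to prove the Theorem (\jaccard{} as a limit of \fuji{}); what yours buys is a shorter per-$k$ argument that also exhibits the value of the unattained infimum, and (since a joint minimizer would in particular minimize over $\ranking[s]$ for its own fixed $\ranking[r]$) it covers the paper's two-sided formulation as well. Your flag about zero scores is not pedantry but a genuine correction: the paper assumes only distinct \emph{non-negative} scores, and its proof asserts $r_n>0$ and $s_1>0$ as a consequence of Proposition~\ref{pro:fuji:min}, which does not follow. Indeed, with $n=2$, $k=1$, $\ranking[r]=(1,0)$ and $\ranking[s]=(0,1)$ one gets $\fuji{}(\ranking[r],\ranking[s],1)=0$, the global lower bound, so a minimizing pair does exist there; strict positivity of all scores, which you impose explicitly, is exactly the hypothesis needed to make the proposition true for every $k<n$.
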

Consequently, this also holds for the $\auc{}_\fuji{}$.

{\bf \jaccard{} is the exact lower bound for \fuji{}.}
The proof of the previous proposition leads to the final result, showing that \jaccard{} is only a limit case of \fuji{} (also similarly for $\auc{}_{\jaccard{}}$ and $\auc{}_{\fuji{}}$):
\begin{theorem}
    For every $\varepsilon >0$ and any two orderings of items $x_i$ (defined by two permutations $\pi$ and $\tau$),
    there exist rankings $\ranking[r] = (r_1, \dots, r_n)$ and $\ranking[s] = (s_1, \dots, s_n)$ with the following properties:
    i) $\ranking[r]$ and $\ranking[s]$ respectively induce the same ordering of items as $\pi$ and $\tau$,
    and ii) $|\fuji{}(\ranking[r], \ranking[s], k) - \jaccard{}(\ranking[r], \ranking[s], k)| \leq \varepsilon$, for $1\leq k \leq n$.
\end{theorem}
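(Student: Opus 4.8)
The idea is that the only place where \fuji{} differs from \jaccard{} is the value $r_i/r_{(k)}$ that Eq.~\eqref{eqn:fuzzy-mu} assigns to an item sitting just outside the top-$k$ set, and this value can be forced to be as small as we like by letting the ranking scores decay geometrically; in that regime the fuzzy membership function collapses onto the crisp one. So the plan is: (1) construct scores realizing $\pi$ and $\tau$ whose consecutive ratios are all bounded by a small $\delta$; (2) write the difference $\fuji{}-\jaccard{}$ in closed form and observe it is nonnegative; (3) bound it by $\delta$ and pick $\delta\le\varepsilon$.

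First I would fix a parameter $\delta\in(0,1)$ and, writing $p_i$ and $q_i$ for the positions of $x_i$ in the orderings $\pi$ and $\tau$, set $r_i=\delta^{\,p_i}$ and $s_i=\delta^{\,q_i}$. These scores are positive and, within each ranking, pairwise distinct (the exponents are distinct and $\delta\neq1$); since $0<\delta<1$ a smaller position yields a larger score, so $\ranking[r]$ induces exactly the ordering $\pi$ and $\ranking[s]$ induces $\tau$, which gives property~i). Moreover $r_{(i)}=\delta^{\,i}$ and $s_{(i)}=\delta^{\,i}$, in particular $r_{(k)},s_{(k)}>0$, so the last (``otherwise'') branch of Eq.~\eqref{eqn:fuzzy-mu} never occurs.

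Next I would compute the difference. Split $\topset{r}{k}\cup\topset{s}{k}$ into the intersection and the two private parts. On $\topset{r}{k}\cap\topset{s}{k}$ both memberships equal $1$, contributing equally to the numerator and the denominator of Eq.~\eqref{eqn:fuji}. For $x\in\topset{r}{k}\setminus\topset{s}{k}$ we have $\mu_{\topset{r}{k}}^{F}(x)=1$ and $\mu_{\topset{s}{k}}^{F}(x)=s_x/s_{(k)}<1$ (strict, because $\rank{s}(x)>k$ and the scores are distinct), hence the min is $s_x/s_{(k)}$ and the max is $1$; symmetrically for $x\in\topset{s}{k}\setminus\topset{r}{k}$. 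Therefore the \fuji{} denominator equals $|\topset{r}{k}\cup\topset{s}{k}|$, exactly as for \jaccard{}, and
$$
\fuji{}(\ranking[r],\ranking[s],k)-\jaccard{}(\ranking[r],\ranking[s],k)=\frac{\sum_{x\in\topset{r}{k}\setminus\topset{s}{k}} s_x/s_{(k)}+\sum_{x\in\topset{s}{k}\setminus\topset{r}{k}} r_x/r_{(k)}}{|\topset{r}{k}\cup\topset{s}{k}|}\ \ge\ 0 .
$$

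Finally I would estimate. If $x\in\topset{r}{k}\setminus\topset{s}{k}$ then $\rank{s}(x)=j$ for some $j>k$, so $s_x/s_{(k)}=\delta^{\,j-k}\le\delta$, and likewise every term $r_x/r_{(k)}$ in the second sum is $\le\delta$. Hence the numerator is at most $\delta\cdot\bigl|\topset{r}{k}\triangle\topset{s}{k}\bigr|\le\delta\cdot|\topset{r}{k}\cup\topset{s}{k}|$, so the whole difference is $\le\delta$; choosing $\delta=\min\{\varepsilon,1/2\}$ yields property~ii) for every $1\le k\le n$ (for $k=n$ both sides are $1$). The analogous statement for $\auc{}_{\fuji{}}$ and $\auc{}_{\jaccard{}}$ follows at once from Eq.~\eqref{eqn:auc}. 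The only delicate point is the case analysis that pins the \fuji{} denominator to $|\topset{r}{k}\cup\topset{s}{k}|$ — i.e. that the ``outside'' memberships are strictly below $1$, which is where the distinct-scores assumption is used — everything else being a direct bound.
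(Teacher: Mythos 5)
Your proposal is correct and takes essentially the same route as the paper's proof, which likewise realizes $\pi$ and $\tau$ by geometrically decaying scores so that the fuzzy memberships of items outside the top-$k$ sets vanish and \fuji{} collapses onto \jaccard{}. Your version is in fact more complete than the paper's sketch: the explicit identity showing the \fuji{} denominator equals $|\topset{r}{k}\cup\topset{s}{k}|$ and the quantitative bound of the difference by $\delta$ are exactly the details the paper leaves implicit.
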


The theorem states that, given any two orderings of items, one can find two vectors of ranking scores that respect these two orders and for which, the points of \fuji{} curve lie arbitrarily close to the corresponding points of \jaccard{} curve. This happens
when the ratios $r_{(i)} / r_{(i + 1)}$ and $s_{(i)} / s_{(i + 1)}$ tend to go to infinity. In this case, \fuji{} approaches \jaccard{} from above,
since we always have $\fuji{}(\ranking[r], \ranking[s], k) \geq \jaccard{}(\ranking[r], \ranking[s], k)$.

\subsection{Correction for Chance}\label{sec:correction-for-chance}

Even though \jaccard{} as defined in Eq.~\ref{eqn:jaccard} does not possess this property, it can be easily adapted by computing the expected value $e(k) = \mathbb{E}_{\pi}[\jaccard{}(\ranking{}, \pi(\ranking{}), k)]$ where the distribution of permutation $\pi$ is uniform, and ranking $\ranking{}$ is arbitrary, since the scores are not taken into account directly.
For larger values of $n$, the exact computation may take some time,
but one can use $\hat{e}(k) = k / (2 n - k)$ instead which accurately approximates $e(k)$.

Normalizing \fuji{} in the same manner would demand computing 
\begin{equation*}
\mathbb{E}_{\ranking{}, \ranking[s]{}, \pi}[\jaccard{}(\ranking{}, \pi(\ranking[s]{}), k)].
\end{equation*}
This implies knowing the distribution of the scores, since $\ranking{}$ and $\ranking[s]{}$ cannot be arbitrary.
When $n$ is large, the distribution is expected to be long-tailed (e.g., a power-law distribution),
i.e., only a small proportion of items would be relevant. However, the exact distribution should be determined separately per use case.

\subsection{Utility of Ranking Scores}
Given Equations \eqref{eqn:fuzzy-mu} and \eqref{eqn:fuji}, it is evident that \fuji{} takes the ranking scores into account.

\section{Applications}

In this section, we demonstrate the capabilities of \fuji{} on the task of comparing feature ranking lists. More specifically, we consider feature ranking tasks on 24 classification datasets where the values of the categorical target variable $y$ depend on the values of numeric or categorical features $x_i$:
\texttt{APS failure} \cite{aps},
\texttt{biodegradability} \cite{biodeg-p2}, 
\texttt{bladder} \cite{dyrskjot2003identifying},
\texttt{childhood} \cite{cheok2003treatment},
\texttt{cmlTreatment} \cite{crossman2005chronic},
\texttt{coil2000} \cite{van2004bias},
\texttt{colon-cancer} \cite{alon1999broad},
\texttt{digits} \cite{xu1992methods},
\texttt{diversity} \cite{dvzeroski1998machine}, 
\texttt{dlbcl} \cite{dlbcl},
\texttt{gas drift} \cite{vergara2012chemical},
\texttt{genes} \cite{weinstein2013cancer},
\texttt{leukemia} \cite{golub1999molecular},
\texttt{madelon} \cite{guyon2008feature},
\texttt{mll} \cite{armstrong2002mll},
\texttt{optdigits} \cite{optdigits},
\texttt{OVA-Breast} \cite{stiglic2010stability},
\texttt{p-gp} \cite{levatic2013accurate},
\texttt{p53} \cite{danziger2009predicting},
\texttt{pd-speech} \cite{sakar2013collection},
\texttt{QSAR degradation} \cite{mansouri2013quantitative},
 \texttt{sonar} \cite{gorman1988analysis},
\texttt{srbct} \cite{khan2001classification}, and
\texttt{water-all} \cite{dvzeroski1998machine}. The datasets cover different domains, including: medicine (tumor analysis with gene expression data), biodegradability of chemicals, failure prediction, handwriting recognition, etc. A summary of their characteristics (in terms of the numbers of features and examples, and a brief description) is given in the supplementary material (Appendix D). 

We compare \fuji{} with three other similarity scores: i) \jaccard{}, since it is the base of \fuji{} ; ii) correlation, since it takes ranking scores into account and iii) Hamming distance, since it exhibits similar performance to \fuji{}, as seen in Sec.~\ref{sec:examples}. The considered ordered lists, correspond to different feature rankings produced by the following standard, and widely used, algorithms (their parameters are given in the parentheses):
Relief \citep{Kononenko2003} (15 neighbors of each instance are computed), Mutual Information (MI) \cite{Kraskov2004} (parameter-less), as well as two ensemble-based ranking scores Genie3 \cite{Huynh-Thu2010} and Random Forest (RF) \cite{Breiman2001} (each computed from a random forest ensemble with 200 trees of unlimited depth and feature subset size of $\sqrt{n}$).

\begin{figure}[!b]
\centering
\includegraphics[width=0.6\textwidth]{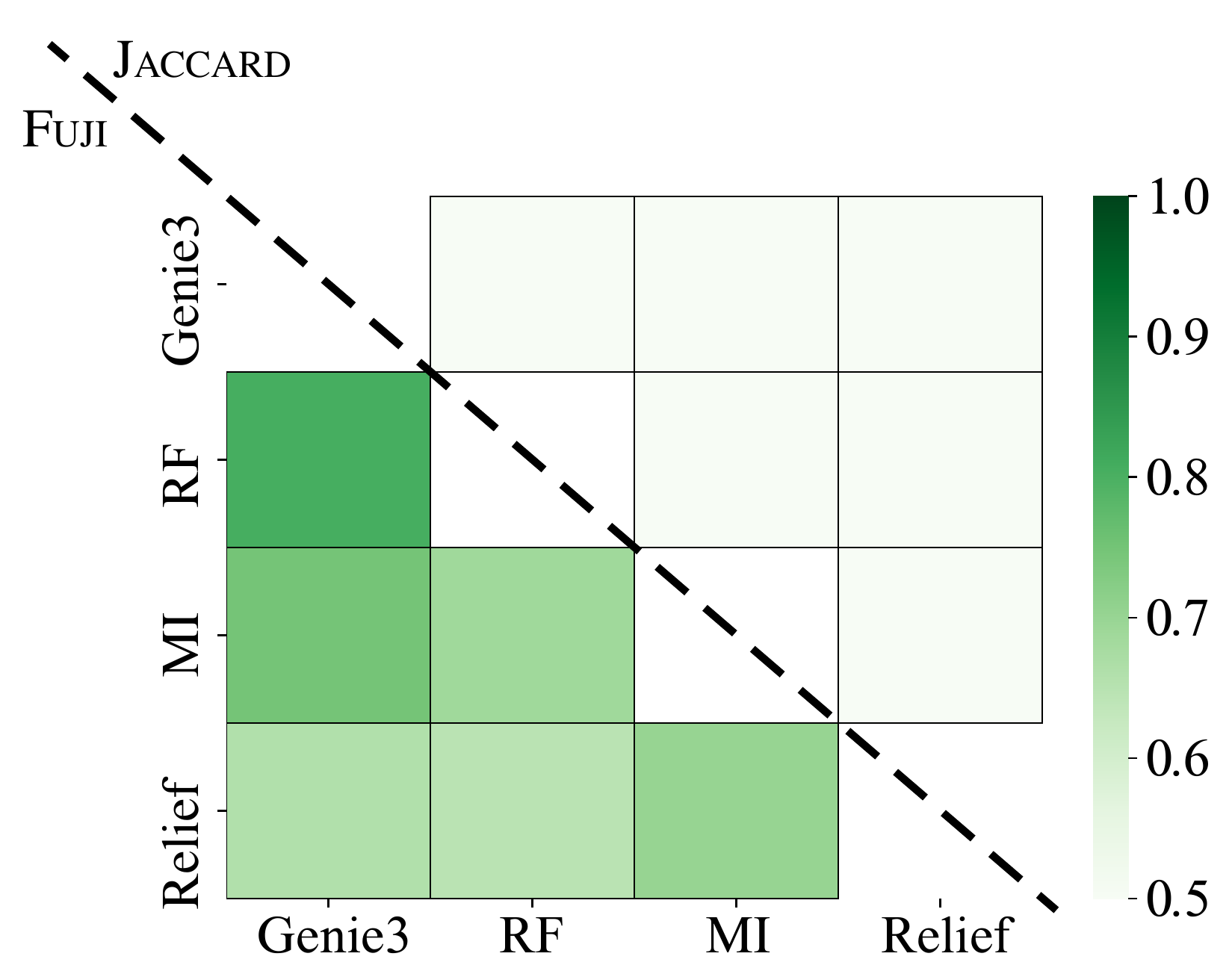}
\caption{Comparison of the rankings via the $\auc{}$ under the \fuji{} and \jaccard{} curve, averaged over the 24 datasets.}
\label{fig:stc:heatplot}
\end{figure}



We start our analysis with the average \fuji{} similarities across all datasets. The lower-left triangle in Fig.~\ref{fig:stc:heatplot} presents the average $\auc{}_\fuji{}$, while the upper-right triangle depicts the average $\auc{}_\jaccard{}$. The curves consist of points $(k, f_k)$, for $k\in\{1, 2, 4, \dots, 2^m, n\}$ where $m = \mathit{floor}(\log_2 n)$, in order to put more weight on the beginning of the ranking, which is typically of the highest interest. The figure clearly shows that \fuji{} can detect similarities between the compared rankings, which is not possible in the case of \jaccard{}. In particular, we can observe that the two ensemble-based rankings are the most similar on average. This makes sense since these two rankings are computed from the same underlying ensemble classifier. The least similar are the RF and Relief rankings, i.e., the one that uses an underlying classifier the most extensively, and the one that does not use it all. The level of similarity for the remaining five pairs of rankings is approximately the same. In contrast, we can see that \jaccard{} is practically unable to distinguish between different levels of similarity.

Next, we present the results for individual classification tasks. For brevity, here we only show tasks concerning high-dimensional datasets, i.e. the \texttt{genes} dataset with 20531 features (Fig.~\ref{fig:stc:genes}) and the \texttt{p53} dataset with 5408 features (Fig.~\ref{fig:stc:p53}). The remaining results of the other classification tasks are given in Appendix D of the supplementary material and are in line with the conclusions presented here. 

The comparison of the ensemble-based feature rankings on the \texttt{genes} dataset are presented in Fig.~\ref{fig:stc:genes:ensembles}. Comparing the RF and Genie3 ranking already shows the typical instability and pessimistic values of \jaccard{} curves:
since the first features of the rankings are different, the curve starts at $0$, quickly increases to $0.5$ at $k = 4$ and drops again to $0.15$ at $k = 15$. \fuji{} curve, on the other hand, correctly estimates that the rankings are not that different. For example, the top-ranked feature $x_{(1)}$ in the Genie3 ranking, has a rank of $5$ in the RF ranking -- \fuji{} can detect this by taking the relevance scores into the account.

\begin{figure*}[b!]
\centering
\begingroup
    \captionsetup[subfigure]{width=0.7\textwidth}
    \subfloat[\label{fig:stc:genes:ensembles} Genie3 and RF rankings on \texttt{genes} dataset]{
   \includegraphics[width=0.8\textwidth]{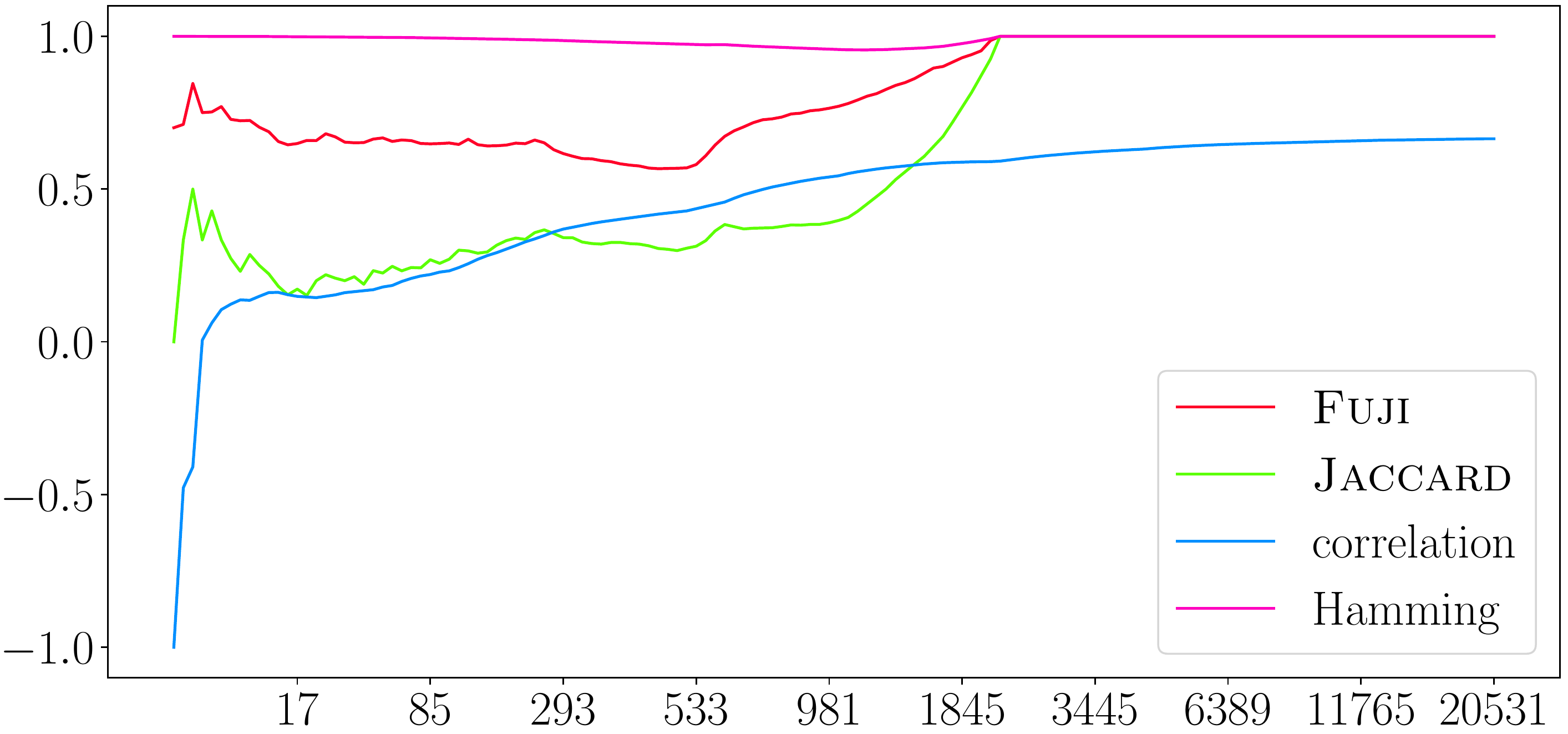}}
\endgroup
\begingroup
    \centering
    \captionsetup[subfigure]{width=0.7\textwidth}
    \subfloat[\label{fig:stc:genes:genie-mi-relief} Relief and MI rankings on \texttt{genes} dataset]{
    \includegraphics[width=0.8\textwidth]{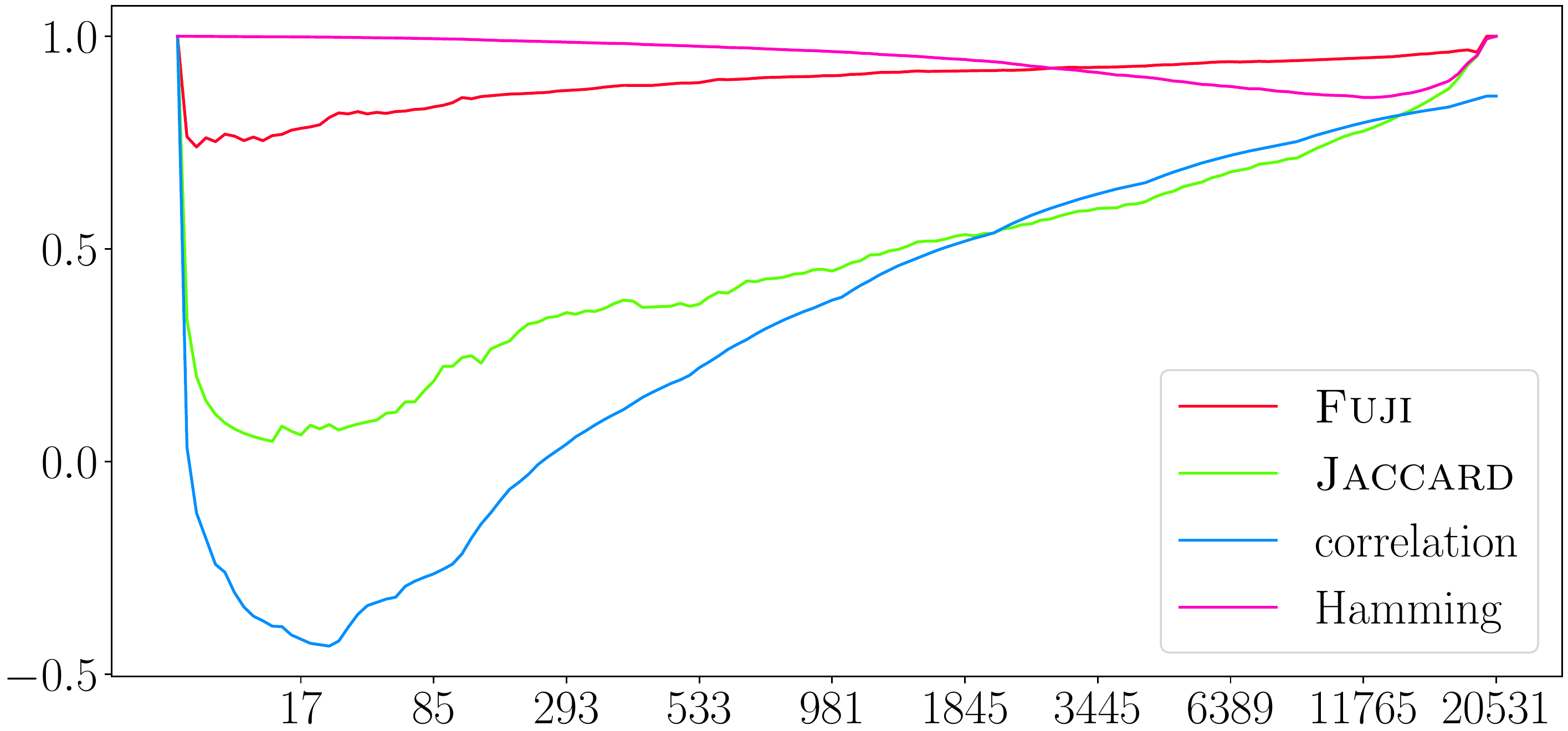}}
\endgroup
\caption{Similarity of the rankings for the \texttt{genes} dataset, for different numbers $k$ of top-ranked features (x-axis).
We compare the ensemble-based rankings (left), and Relief and MI rankings (right).}
\label{fig:stc:genes}
\end{figure*}

The relevance scores are taken into account also by correlation, however, we can see that this measure is too strict since it measures the similarity of the sets via linear dependence of the corresponding scores. Hamming similarity, on the other hand, is always too optimistic at the beginning of the rankings if the number of features $n$ is large, since it is based on the symmetric difference of the sets $\topset{r}{k}$ and $\topset{s}{k}$ which is always small (compared to $n$) for the lower values of $k$, no matter how different those sets are. After the point $k = 2117$, the curves of \fuji{}, \jaccard{} and Hamming coincide. Examination of the actual feature relevance scores reveals that all curves meet at the same point because both ranking algorithms consider the same majority of the features (about $~18000$) irrelevant (i.e., their relevance score is $0$). This is not surprising since these rankings are computed from the same ensemble model.

The "pessimistic" \jaccard{} estimates are again observed when comparing the MI and Relief rankings in Fig.~\ref{fig:stc:genes:genie-mi-relief}: Starting at $1$ due to the identical top-ranked feature of the rankings,
the \jaccard{} curve decreases abruptly until the next feature is added to the intersection of top-sets at $k = 12$. This is not the case for \fuji{}, which detects this feature already at $k = 4$ (when it is included in the Relief top-set). At that very same point, its MI relevance ($0.78$) is still similar enough to the relevance of $x_{(4)}$ in MI ranking ($0.83$).

\begin{figure*}[b!]
\centering
\begingroup
    \captionsetup[subfigure]{width=0.47\textwidth}
    \subfloat[\label{fig:stc:p53:ensembles}Genie3 and RF rankings on \texttt{p53} dataset]{
    \includegraphics[width=0.8\textwidth]{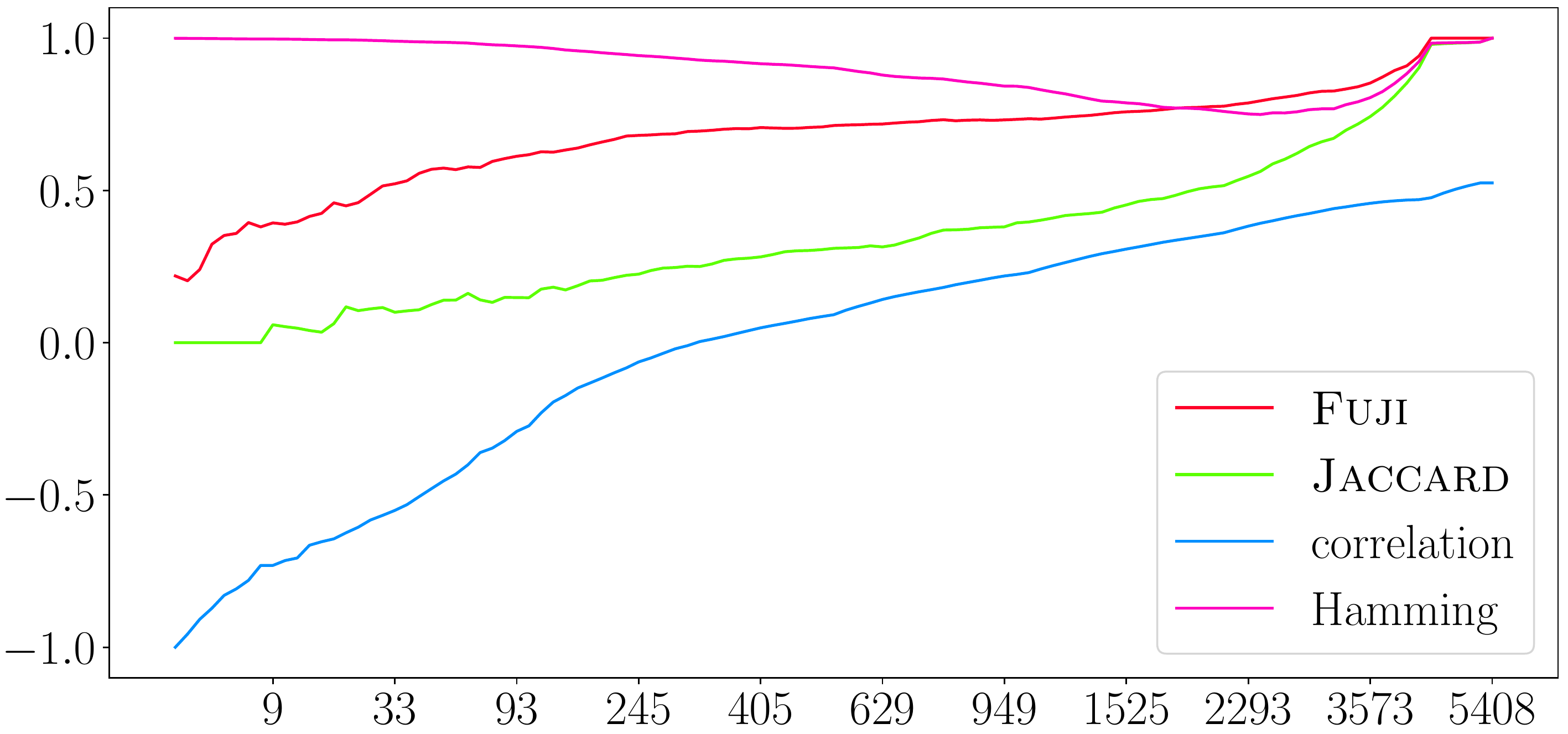}}
\endgroup
\begingroup
    \centering
    \captionsetup[subfigure]{width=0.47\textwidth}
    \subfloat[\label{fig:stc:p53:genie-mi-relief} Relief and MI rankings on \texttt{p53} dataset]{
    \includegraphics[width=0.8\textwidth]{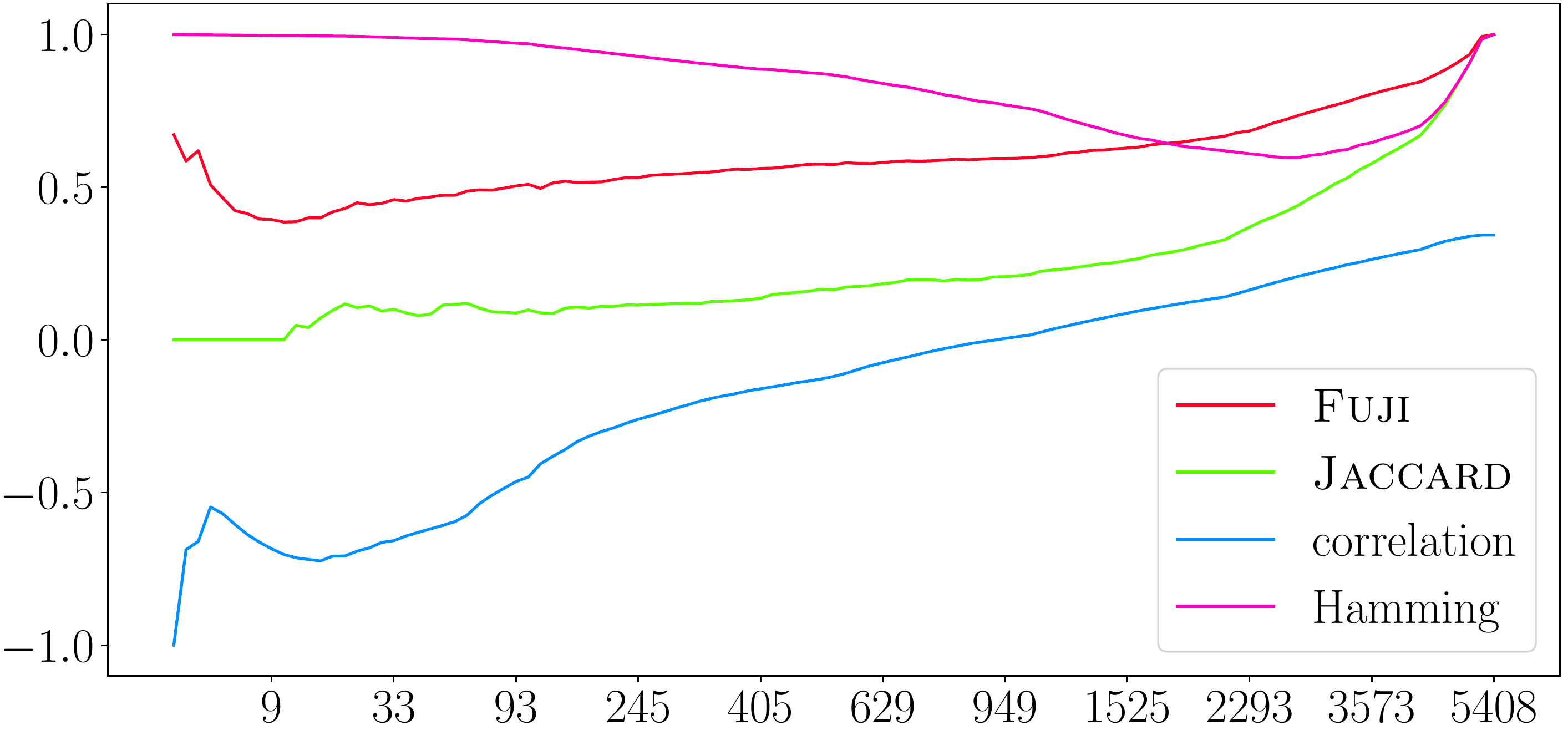}}
\endgroup

\caption{Similarity of the rankings for the \texttt{p53} dataset, for different numbers $k$ of top-ranked features (x-axis).
We compare the ensemble-based rankings (left), and Relief and MI rankings (right).}
\label{fig:stc:p53}
\end{figure*}

Finally, we present the analysis on the \texttt{p53} dataset. Similarly, we start with the comparison of the ensemble-based rankings (Fig.~\ref{fig:stc:p53:ensembles}). We can see that the Genie3 and RF ranking for this dataset are substantially less similar as compared to the \texttt{genes} dataset. For instance, the top-ranked feature identified by the RF-ranking has ten times lesser ranking score in Genie3-ranking than its top-ranked feature. This is well reflected in \fuji{} and \jaccard{} curves. Similarly, to \texttt{genes} dataset, this also holds when comparing Relief and MI rankings in Fig.~\ref{fig:stc:p53:genie-mi-relief}. In the case of Hamming and correlation, we cannot draw any firm conclusions from the curves.

\subsection*{A case study on The Cancer Genome Atlas dataset (TCGA)}

To further justify the utility of \fuji{}, we study the relationship between \fuji{} and \jaccard{} with respect to the identified top-ranked features. In particular, we examine them on the \texttt{genes} dataset, which given the results presented in Figure~\ref{fig:stc:genes:genie-mi-relief}, their scores differ substantially. The \texttt{genes} dataset consists of gene expression information for more than 20{,}000 genes, therefore we additionally inspect the top-ranked genes in more detail. More specifically, we individually focus on the 10 top-ranked genes by using the UniProt knowledge-base \cite{uniprot2017uniprot} and present this analysis in Table~\ref{tbl:comparison} along with the corresponding functional annotation. 

It can be observed that the most relevant gene is in both cases \textit{HNF1A}, a well-known oncogene (as expected for this dataset). However, the remainder of the ranked genes differs when considering \jaccard{} as opposed to \fuji{}. A more detailed inspection of the remaining nine genes reveals most of them are related to DNA-binding processes or ion transport pathways. For instance, genes such as \textit{HK3}, \textit{HNF1B} and \textit{CLRN2}, as identified by Relief, have similar functional analogues in \textit{GAK}, \textit{TRPM8}, \textit{EFHC2} and \textit{EMX1} as identified by Mutual Information-based ranking. More importantly, by inspecting the remaining GO terms, all of the observed genes correspond to a similar subset of metabolic pathways, all related to DNA, ATP or major signaling processes. This comparison implies that the two rankings should be similar, which is the case for \fuji{}, but not for \jaccard{}.

\begin{table*}[h] 
\caption{Comparison of highly ranked genes by MI and Relief with corresponding molecular function annotation (if available).}
\resizebox{\textwidth}{!}{
\begin{tabular}{l|lll|lll}
\hline
Rank & MI    &     Gene name     & Description (GO)                             & Relief &   Gene name    & Description (GO)                                \\ \hline
1    & 7964  & \textit{HNF1A}    & DNA binding                                  & 7964   & \textit{HNF1A} & DNA binding                                               \\
2    & 18381 & \textit{TMPRSS15} & scavenger receptor activity                  & 19236  & \textit{USF2}  & DNA-binding transfription                 \\
   &   &  & & & & factor activity \\
3    & 17109 & \textit{SPCS3}    & peptidase activity                           & 4773   & \textit{DDX3X} & ATPase activity                                           \\
4    & 5407  & \textit{EFHC2}    & calcium ion binding                          & 7896   & \textit{HK3}   & ATP binding                                               \\
5    & 18746 & \textit{TRPM8}    & calcium channel activity                     & 3921   & \textit{CLRN2} & calcium ion binding, cell adhesion                        \\
6    & 6816  & \textit{GAK}      & ATP binding                                  & 7965   & \textit{HNF1B} & DNA binding                                               \\
7    & 8349  & \textit{IGSF9}    & Cell-cell adhesion mediation                 & 8891   & \textit{KDM5C} & DNA binding, dioxygenase activity                         \\
8    & 7992  & \textit{HOMER1}   & G protein-coupled glutamate & 742    & \textit{ANKS3} & Unknown, potentially vasopressin \\
   &   &  & receptor binding  & & & signaling in the kidney \\
9    & 5576  & \textit{EMX1}     & DNA binding transcription    & 15301  & \textit{RPS4X} & RNA binding                                               \\
   &   &  & factor activity  & & & \\
10   & 6593  & \textit{FOS}      & chromatin binding                            & 19313  & \textit{UTS2}  & signaling receptor binding                \\ \hline              
\end{tabular}
}
\label{tbl:comparison} 
\end{table*}

\section{Conclusion}

In this study, we propose \fuji{} -- a novel approach for estimating the similarity between ordered lists. In particular, we showcased the theoretical properties of \fuji{}, proving its continuous-like nature, which renders it more robust than its counterpart -- the \jaccard{} index. Moreover, we empirically show the advantage of using \fuji{} over \jaccard{} as well as over ten other standard benchmark scores on three synthetic scenarios. Finally, we demonstrate its utility in a variety of real-world classification tasks.

While in this study we limit our focus to only estimating the similarity of feature ranking sets, the utility of \fuji{} is much broader. For instance, in the context of decision trees, it can be used for measuring tree similarity by evaluating the splits at each node. In more general terms, \fuji{} can be also readily employed for comparing machine learning methods (in general) across multiple performance metrics. 

Another application of \fuji{} relates to the comparison of multiple node centrality measures in the field of complex network analysis. Given the set nodes in a network, centralities pinpoint relevant nodes. There are many existing centralities, and it is not entirely clear in what relation they emerge on real-world networks. Moreover, recent advances in natural language processing exploit the notion of neural attention \cite{vaswani2017attention}. This mechanism yields real-valued scores for each token, which we believe is a suitable scenario for \fuji{}. One of the possible applications relates to bias detection in contemporary language models. For instance, different fine-tuning scenarios may yield different attention vectors for a given sentence. \fuji{} could also be used to detect and group distinct fine-tuning settings based on the similarity of the resulting attention vectors. In practice, apparent biases, e.g., political, could emphasize different tokens leading to differences when compared with \fuji{}.



\section*{Acknowledgements}
The computational experiments presented here were executed on a computing infrastructure from the Slovenian Grid (SLING) initiative, and we thank the administrators Barbara Kra\v{s}ovec and Janez Srakar for their assistance.

\section*{Funding}
This work was supported by the Slovenian Research Agency via the grant P2-0103 and a young researcher grant to MP and B\v{S}. DK and NS also acknowledge the support of the Slovenian Research Agency via grant J2-9230.


\bibliography{main}

\newpage
\appendix

\section{Proofs}

For completeness, we again give the equations that are present in the main document and are also referenced here.

\begin{equation}
    \label{app:eqn:jaccard}
    \jaccard{}(\ranking[], \ranking[s], k) = \left| \topset{r}{k} \cap \topset{s}{k}\right| / \left| \topset{r}{k} \cup \topset{s}{k} \right|
\end{equation}

\begin{equation}
    \label{app:eqn:fuzzy-mu}
    \mu_{\topset{}{k}}^{F}(x_i)= 
    \begin{cases}
    \hphantom{ii.}1 &;\; x_i \in \topset{}{k} \\
    \score{}_i / \score{}_{(k)} &;\; x_i \notin \topset{}{k} \land \score{}_{(k)} > 0 \\
       \hphantom{ii.}0 &;\; \text{otherwise}
    \end{cases}
\end{equation}

\begin{equation}
    \label{app:eqn:auc}
    \auc{}_{\fuji{}}(\ranking[r], \ranking[s]) = \frac{1}{n - 1}\left(\frac{f_1 + f_n}{2} + \sum_{k = 2}^{n - 1} f_k \right)\text{.}
\end{equation}

\begin{proposition}
\label{app:pro:jaccard:min}
When using \jaccard{}, the least similar ranking to a given ranking $\ranking{}$ is every ranking $\ranking[s]$,
such that $\rank{s}(x_i)  = n + 1 - \rank{}(x_i)$. 
\end{proposition}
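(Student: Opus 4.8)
The plan is to reduce the two-set Jaccard index to a monotone function of the single integer $a_k := |\topset{r}{k}\cap\topset{s}{k}|$, bound $a_k$ from below, and then verify that the reversed ranking meets this bound simultaneously for every cut-off $k$. For the reduction, note that since we compare top-$k$ sets of equal size, $|\topset{r}{k}| = |\topset{s}{k}| = k$, hence $|\topset{r}{k}\cup\topset{s}{k}| = 2k - a_k$, and substituting into \eqref{app:eqn:jaccard} gives $\jaccard(\ranking[r],\ranking[s],k) = a_k/(2k - a_k)$. This is strictly increasing in $a_k$ on $\{0,1,\dots,k\}$, so minimizing the Jaccard curve pointwise — and, via \eqref{app:eqn:auc}, its AUC — is the same as minimizing each $a_k$ over the choice of $\ranking[s]$.

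For the lower bound, inclusion–exclusion gives, for any $\ranking[s]$, $a_k = |\topset{r}{k}| + |\topset{s}{k}| - |\topset{r}{k}\cup\topset{s}{k}| \ge 2k - n$; together with $a_k\ge 0$ this yields $a_k \ge \ell_k := \max\{0,\,2k-n\}$ for every $k$ and every $\ranking[s]$. Consequently $\jaccard(\ranking[r],\ranking[s],k) \ge \ell_k/(2k-\ell_k)$ for all $\ranking[s]$ and all $k$; in particular no point of any Jaccard curve can lie strictly below the curve $k\mapsto \ell_k/(2k-\ell_k)$.

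It remains to exhibit a ranking attaining $\ell_k$ for all $k$. Relabel the items so that $\rank{}(x_i) = i$, i.e. $\topset{r}{k} = \{x_1,\dots,x_k\}$, and take any $\ranking[s]$ with $\rank{s}(x_i) = n+1-i$, so that $\topset{s}{k} = \{x_{n-k+1},\dots,x_n\}$. Then $\topset{r}{k}\cap\topset{s}{k}$ is empty when $2k\le n$ and equals $\{x_{n-k+1},\dots,x_k\}$, of cardinality $2k-n$, when $2k>n$; in either case $a_k = \ell_k$. Hence this $\ranking[s]$ gives $\jaccard(\ranking[r],\ranking[s],k) = \ell_k/(2k-\ell_k)$ for all $k$, so it minimizes the Jaccard curve at every cut-off — which is also the auxiliary statement that no Jaccard curve dips strictly below the one of the two reversed rankings — and, by \eqref{app:eqn:auc}, it minimizes $\auc_{\jaccard}(\ranking[r],\cdot)$ as well; since Jaccard depends only on the induced orderings, the same holds for every $\ranking[s]$ with $\rank{s}(x_i) = n+1-\rank{}(x_i)$, which is the claim.

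None of this is computationally heavy; the step worth isolating is the reduction, because the equal-size hypothesis is precisely what collapses Jaccard into a monotone function of $a_k$ alone. After that the only real point is that one fixed ordering — the reversal — realizes the pointwise intersection minimum $\max\{0,2k-n\}$ at every $k$, so there is no tension between different cut-offs. To also recover the closed form for $\min_{\ranking[s]}\auc_{\jaccard}(\ranking[r],\ranking[s])$ quoted after the proposition, one substitutes $j_k = \ell_k/(2k-\ell_k)$ into \eqref{app:eqn:auc} and separates the cases $n$ odd and $n$ even.
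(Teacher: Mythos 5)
Your proof is correct, but it takes a genuinely different route from the paper. The paper argues by local exchanges: starting from an arbitrary ranking $\ranking[s']$, it swaps the item occupying rank $k_0$ with the item at rank $n$ and verifies, case by case, that each such swap can only decrease (or preserve) every intersection size and increase (or preserve) every union size, then iterates this argument over $i = 1, 2, \dots$ to bubble the ranking toward the reversal. You instead give a global argument: using the equal-size hypothesis to write $\jaccard{}(\ranking[r], \ranking[s], k) = a_k/(2k - a_k)$ as an increasing function of the intersection size $a_k$ alone, you derive the universal lower bound $a_k \geq \max\{0, 2k - n\}$ from the fact that the union cannot exceed $n$ items, and then check that the reversed ranking attains this bound simultaneously at every $k$. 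Your approach is shorter and avoids the delicate bookkeeping in the paper's exchange step (the paper itself flags that one ``has to be more careful'' when iterating its invariant for $i > 1$, and leaves that verification mostly to the reader). It also delivers, as an immediate corollary rather than an afterthought, the remark following the proposition that no point of any \jaccard{} curve can lie strictly below the reversed-pair curve, together with the closed form for $\min_{\ranking[s]}\auc{}(\ranking[r], \ranking[s])$. What the exchange argument buys in return is a template that the paper reuses almost verbatim for Proposition 2 (the \fuji{} analogue with fixed scores), where your reduction to a single integer $a_k$ no longer applies because the fuzzy membership values enter the intersection sum; so the paper's choice is motivated by the later proof rather than by this one. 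One small point worth making explicit in your write-up: since several distinct orderings can attain $a_k = \max\{0, 2k-n\}$ for all $k$ (e.g., for $n = 3$ the ranking $x_3, x_1, x_2$ also does), your argument shows that every reversal is a minimizer, not that reversals are the only minimizers --- but that is exactly what the proposition claims, and the paper's proof establishes no more.
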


\begin{proof}
Without loss of generality, we assume that $\rank{}(x_i) = i$, for all $i$.
Let $\ranking[s']$ be an arbitrary ranking such that $\rank{s'}(x_1) = k_0 \neq n$, and let $x_\ell$ be the item with rank $n$.
Let $\ranking[s]$ be the ranking that is obtained from $\ranking[s']$ by exchanging the ranks of items $x_1$ and $x_\ell$.
We show that ranking $\ranking[]$ is then more similar to $\ranking[s]$ than to ranking $\ranking[s']$.
We compare the intersection and union sizes from \ref{app:eqn:jaccard} in both cases, so let be
$j_k' = \jaccard{}(\ranking[r], \ranking[s'], k) = i_k' / u_k'$ and $j = \jaccard{}(\ranking[r], \ranking[s], k) = i_k / u_k$.
It holds that
\begin{equation}
    \label{app:eqn:invariant}
    \begin{cases}
    u_k' = u_k\; \land\; i_k' = i_k &;\; k < k_0\\
    u_k' \leq u_k\; \land\; i_k'\geq i_k &;\; k \geq k_0
    \end{cases}\text{.} 
\end{equation}
The case $k < k_0$ is obvious since these parts of the rankings $\ranking[s]$ and $\ranking[s']$ coincide. For $k\geq k_0$, we know that i) moving $x_1$ to the last place
(in the ranking) does not decrease the union size $u_k'$ as compared to $u_k$ since $x_1\in \topset{r}{k}$, and ii) moving $x_\ell$ to the $k_0$-th place might increase some union sizes $u_k$ by $1$. Thus, $u_k' \leq u_k$. An analogous argument shows that $i_k'\geq i_k$: i) moving $x_1$ to the last place decreases the intersection size $i_k'$ by $1$, and ii) moving $x_\ell$ to the $k_0$-th place might result in increasing it back. Thus, $j_k'\geq j_k$. 

In the same manner, one proceeds to proving that $\rank{s}(x_i)  = n + 1 - \rank{}(x_i)$, for $i > 1$. Doing so, we only have to be more careful when proving Eq.~\eqref{app:eqn:invariant},
since, for example, if $i = 2$ and $\rank{s'}(x_2) = 1$, moving $x_2$ to the place $n - 1$ decreases $u_k'$ (but in that case, moving $x_\ell$ to the first place increases it back).
\end{proof}

\begin{proposition}\label{app:pro:fuji:min}
Let $S$ be the set of rankings generated by all permutations of the fixed scores $s_{(i)}$, and $\ranking{}$ an arbitrary ranking.
To the ranking $\ranking[]$ least similar element of $S$ is the
ranking $\ranking[s]$, such that $\rank{s}(x_i)  = n + 1 - \rank{}(x_i)$. 
\end{proposition}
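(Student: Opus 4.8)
We read the statement as: among all $\ranking[s]\in S$, the reversal $\ranking[s]^{\mathrm{rev}}$ (i.e.\ $\rank{s}(x_i)=n+1-\rank{}(x_i)$) minimizes $\auc{}_{\fuji{}}(\ranking[], \ranking[s])$; this matches the remark, made just after this proposition, that the infimum is not attained once the scores are free. Unlike the \jaccard{} case of Proposition~\ref{app:pro:jaccard:min}, this is genuinely an area statement, not a pointwise one --- individual values $\fuji{}(\ranking[], \ranking[s], k)$ can lie below the corresponding values for $\ranking[s]^{\mathrm{rev}}$. Since $S$ is finite a minimizer exists, and after relabelling we may assume $\rank{}(x_i)=i$ and $r_1>\dots>r_n\geq 0$. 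We encode $\ranking[s]\in S$ by the permutation $\pi$ with $\rank{}(x^{\ranking[s]}_{(j)})=\pi(j)$, so $\ranking[s]^{\mathrm{rev}}$ corresponds to the strictly decreasing $\pi_0(j)=n+1-j$. A convenient first step is to rewrite $\fuji{}$: on $\topset{r}{k}\cup\topset{s}{k}$ at least one of $\mu^{F}_{\topset{r}{k}}$, $\mu^{F}_{\topset{s}{k}}$ equals $1$, so the denominator of $\fuji{}(\ranking[], \ranking[s], k)$ equals $|\topset{r}{k}\cup\topset{s}{k}|=2k-|\topset{r}{k}\cap\topset{s}{k}|$ while the numerator equals $|\topset{r}{k}\cap\topset{s}{k}|$ plus fractional corrections determined by $\pi$ and the fixed sorted scores; this turns $\auc{}_{\fuji{}}(\ranking[], \ranking[s])$ into an explicit function of $\pi$.

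The plan is an exchange argument. Let $\ranking[s]^\ast$ be a minimizer. If $\pi^\ast\neq\pi_0$ there is a position $p$ with $\pi^\ast(p)<\pi^\ast(p+1)$; let $\ranking[s]'$ be obtained by transposing the items of $\ranking[s]^\ast$-ranks $p$ and $p+1$, so that $\pi'=\pi^\ast\circ(p\;\,p{+}1)$ has strictly fewer inversions relative to $\pi_0$. The goal is to show $\auc{}_{\fuji{}}(\ranking[], \ranking[s]')\leq\auc{}_{\fuji{}}(\ranking[], \ranking[s]^\ast)$; iterating then reaches $\ranking[s]^{\mathrm{rev}}$, which therefore attains the minimum too, and a single strict step along the chain upgrades this to uniqueness. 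For the per-transposition bound, note that both swapped items keep $\ranking[s]$-rank $\geq p$, hence $\fuji{}(\ranking[], \cdot, k)$ is unchanged for $k>p$ and only $k\leq p$ matter. For $k<p$ the transposition only lowers the $\ranking[s]$-score of the item $x$ at rank $p$ from $s_{(p)}$ to $s_{(p+1)}$ (and raises that of the item $y$ at rank $p+1$ symmetrically), and a short case check on whether $x,y$ lie in $\topset{r}{k}$ shows $\fuji{}(\ranking[], \cdot, k)$ weakly decreases. For $k=p$ the set $\topset{s}{p}$ loses $x$ and gains $y$; of the three cases according to whether $\rank{}(x)\leq p$ and $\rank{}(y)\leq p$, the case $\rank{}(x),\rank{}(y)\leq p$ leaves $\fuji{}(\ranking[], \cdot, p)$ unchanged and the case $\rank{}(x)>p$ strictly decreases it.

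The remaining case, $\rank{}(x)\leq p<\rank{}(y)$, is the crux: $y$ lies outside $\topset{r}{p}\cup\topset{s}{p}$ before the swap and enters it afterwards, so both numerator and denominator of $\fuji{}(\ranking[], \cdot, p)$ increase --- by $s_{(p+1)}/s_{(p)}+r_{(\rank{}(y))}/r_{(p)}-1$ and by $1$ respectively --- and one can build examples in which $\fuji{}(\ranking[], \cdot, p)$ genuinely rises under a single transposition. The resolution is to control which transposition (or which block reversal) is performed: taking $p$ maximal with $\pi^\ast(p)<\pi^\ast(p+1)$ forces $\pi^\ast$ to be decreasing on $\{p+1,\dots,n\}$, so $\rank{}(y)$ is large and the numerator increase above is small, and then one argues that the weakly negative contributions from $k<p$ outweigh the $k=p$ contribution to $\auc{}_{\fuji{}}$ --- equivalently, reverse a whole maximal increasing run of $\pi^\ast$ at once and estimate the net change of $\sum_{k\leq p}w_k f_k$, with $w_k$ the trapezoidal weights of Eq.~\eqref{app:eqn:auc}. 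An alternative, transposition-free route compares $\ranking[s]$ with $\ranking[s]^{\mathrm{rev}}$ directly through the explicit formula above, using Proposition~\ref{app:pro:jaccard:min} to handle the integer part $|\topset{r}{k}\cap\topset{s}{k}|$ and a rearrangement argument on the fixed scores for the fractional corrections. In either approach, reconciling a possible pointwise increase at $k=p$ with the overall decrease of the area is where the real work lies.
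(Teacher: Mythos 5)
Your setup is sound: the reduction of the denominator to the crisp union size, the observation that an adjacent transposition toward the reversal leaves $\fuji{}(\ranking[],\cdot,k)$ unchanged for $k>p$, weakly decreases it for $k<p$, and the three-way case split at $k=p$ are all correct, and you have put your finger on exactly the right difficulty: in the case $\rank{}(x)\le p<\rank{}(y)$ the numerator gains $s_{(p+1)}/s_{(p)}+r_y/r_{(p)}-1$ while the denominator gains $1$, and this can raise the value at $k=p$. But the proposal stops there -- none of the suggested repairs (maximal $p$, block reversal, a rearrangement argument on the explicit formula) is carried out, so the proof is incomplete at its decisive step. Worse, no repair can exist, because the statement fails once consecutive scores are allowed to be nearly equal. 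Take $n=4$, reference scores $\ranking[r]=(100,\,10,\,9.9,\,0.1)$, and the fixed score multiset $\{100,10,9.9,0.1\}$. Let $\ranking[s]$ be the reversal ($x_1\mapsto 0.1$, $x_2\mapsto 9.9$, $x_3\mapsto 10$, $x_4\mapsto 100$) and let $\ranking[t]$ assign $x_1\mapsto 10$, $x_2\mapsto 0.1$, $x_3\mapsto 9.9$, $x_4\mapsto 100$. Then $\fuji{}(\ranking[r],\ranking[s],2)=2.00/4=0.5$, because $x_2$ and $x_3$ carry fuzzy memberships $0.99$ in the opposite top sets, whereas $\fuji{}(\ranking[r],\ranking[t],2)=1.02/3=0.34$; the two curves coincide at $k=3$ and $k=4$ and differ by less than $0.05$ at $k=1$, so $\auc{}_{\fuji{}}(\ranking[r],\ranking[t])\approx 0.457 < 0.502\approx\auc{}_{\fuji{}}(\ranking[r],\ranking[s])$. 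Hence $\ranking[t]$ is strictly less similar to $\ranking[r]$ than the reversal, both pointwise at $k=2$ and in area, contradicting the proposition under either reading.

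For comparison, the paper's own proof uses a different exchange -- it swaps the item currently under consideration with the item occupying the last free rank (building the pattern $n,n-1,\dots$ from the top of $\ranking[r]$) and asserts that each swap preserves the union sizes' behaviour and never increases the fuzzy intersection size, which would give pointwise domination of the reversal's curve. That assertion breaks in precisely the case you isolated: applied to $\ranking[t]$ above, the paper's second exchange step turns the ordering $x_4,x_2,x_3,x_1$ into the reversal and raises the fuzzy intersection at $k=2$ from $1.02$ to $2.00$, because the unit contribution of the element leaving the crisp intersection is more than offset by the two near-unit fuzzy memberships of the newly separated pair; the ratio rises from $0.34$ to $0.5$. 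So your instinct that the pointwise claim can fail is correct and goes beyond what the paper acknowledges, but the conclusion to draw is not that a subtler area argument is needed -- it is that the proposition requires an additional hypothesis (some quantitative separation of consecutive scores, under which the fuzzy correction terms are dominated and both your transposition argument and the paper's exchange argument would close) before any proof can succeed.
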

As it shall be seen in the Proposition \ref{app:pro:fuji:argmin}, $\inf_{\ranking[s]} \auc{}_{\fuji{}}(\ranking[], \ranking[s])$ is not achieved so there is no \emph{least similar ranking}.
However, the upper claim can still be proved.

\begin{proof}
We repeat the argument from the proof of Proposition \ref{app:pro:jaccard:min} and also use the notation where $\ranking[r]$ shall be the ranking such that $\rank{r}(x_i) = i$, for all $i$. 
The ranks of the items in the rankings $\ranking[s]$ and $\ranking[s']$ are all equal except for the items $x_1$ and $x_\ell$ such that $\rank{s}(x_1) = \rank{s'}(x_\ell) = n$ and
$\rank{s'}(x_1) = \rank{s}(x_\ell) = k_0$.

Since replacing \jaccard{} by \fuji{} score does not change the union sizes, one has to check whether the inequality $i_k'\geq i_k$ still holds when $k\geq k_0$.
Let $U_k = \topset{r}{k}\cup \topset{s}{k}$ and $U_k' = \topset{r}{k}\cup \topset{s'}{k}$. Note that 
$$
i_k'\quad 
\begin{cases}
 = i_k &;\; x_1, x_\ell \in U_k' \\
 \geq i_k  &;\; x_1\in U_k'\; \land\; x_\ell\notin U_k'\\ 
 = i_k &;\; x_1, x_\ell \notin U_k'
\end{cases}
$$
by the similar arguments as in the proof of Proposition \ref{app:pro:jaccard:min} with the difference that we are not adding or subtracting only 0s and 1s but rather values of $\mu$ from Eq.~\eqref{app:eqn:fuzzy-mu}.
Note that only the upper three cases need to be taken into account since the case $x_1\notin U_k'\; \land\; x_\ell\in U_k'$ is impossible
by construction. Similarly, the proof can be completed by induction.
\end{proof}

\begin{proposition}\label{app:pro:fuji:argmin}
In the case of \fuji{}, the minimizing pair of rankings does not exist for any $k < n$.
\end{proposition}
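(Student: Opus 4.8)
The plan is to show that the infimum $\inf_{\ranking[s]}\fuji{}(\ranking[r],\ranking[s],k)$ (equivalently $\inf_{\ranking[r],\ranking[s]}\fuji{}(\ranking[r],\ranking[s],k)$) is never attained for $k<n$, by exhibiting, for every pair of rankings, a strictly less similar pair. The key is a decomposition of the \fuji{} ratio. Fix the orderings of items induced by $\ranking[r]$ and $\ranking[s]$. Since the two sums in the definition of \fuji{} run over $\topset{r}{k}\cup\topset{s}{k}$ and, on that set, the larger of the memberships $\mu^{F}_{\topset{r}{k}}(x),\mu^{F}_{\topset{s}{k}}(x)$ from \eqref{app:eqn:fuzzy-mu} always equals $1$, the denominator of $\fuji{}(\ranking[r],\ranking[s],k)$ is $|\topset{r}{k}\cup\topset{s}{k}|$ and depends only on the orderings, not on the scores. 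The numerator is $|\topset{r}{k}\cap\topset{s}{k}|$ plus the fuzzy contributions $\sum_{x\in\topset{r}{k}\setminus\topset{s}{k}} s_x/s_{(k)}+\sum_{x\in\topset{s}{k}\setminus\topset{r}{k}} r_x/r_{(k)}$, each summand lying strictly between $0$ and $1$ because the scores are positive and these items have rank larger than $k$. Hence $\fuji{}(\ranking[r],\ranking[s],k)\ge\jaccard{}(\ranking[r],\ranking[s],k)$, and $\fuji{}(\ranking[r],\ranking[s],k)=1$ exactly when $\topset{r}{k}=\topset{s}{k}$.

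Next I would dispose of a candidate minimizer at a fixed $k<n$ in two cases. If $\topset{r}{k}=\topset{s}{k}$ then $\fuji{}(\ranking[r],\ranking[s],k)=1$, which is not minimal: because $k<n$ one can replace $\ranking[s]$ by a ranking with a different top-$k$ set (e.g., one obtained by swapping the items of ranks $k$ and $k+1$), and the decomposition gives a value strictly below $1$. So any candidate minimizer has $\topset{r}{k}\ne\topset{s}{k}$, hence a strictly positive total fuzzy contribution. Keeping the orderings fixed, replace the scores of all items of rank greater than $k$ in $\ranking[s]$ (and, if $\ranking[r]$ is also free, in $\ranking[r]$) by a strictly decreasing sequence of positive values bounded above by $\varepsilon\,s_{(k)}$ (respectively $\varepsilon\,r_{(k)}$); the scores stay distinct and positive, the orderings, the sets $\topset{r}{k}$ and $\topset{s}{k}$, and the denominator are unchanged, but every fuzzy summand now falls below $\varepsilon$. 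For $\varepsilon$ small enough the numerator, and therefore $\fuji{}(\ranking[r],\ranking[s],k)$, strictly decreases. Since the pair was arbitrary and a smaller $\varepsilon$ always exists, no minimizing pair exists for any $k<n$; this also makes precise the remark after Proposition~\ref{app:pro:fuji:min} that $\inf_{\ranking[s]}\auc{}_{\fuji{}}(\ranking[r],\ranking[s])$ is not attained.

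For the consequence for $\auc{}_{\fuji{}}$ one needs a \emph{single} score vector that drives down all $f_k$ with $k<n$ at once, with $f_n=1$ fixed; the right construction is to spread the scores geometrically, $s_{(i)}\propto\lambda^{n-i}$ with $\lambda\to\infty$, so that $s_x/s_{(k)}=\lambda^{k-\rank{s}(x)}\to 0$ for every cut-off simultaneously. Unless $\ranking[r]$ and $\ranking[s]$ induce the same ordering (in which case $\auc{}_{\fuji{}}=1$ is trivially non-minimal), at least one $f_k$ with $k<n$ strictly decreases, so $\auc{}_{\fuji{}}$ decreases, and again no minimizer exists. I expect the main obstacle to be exactly this geometric-spreading step, which is also the construction underlying the theorem that follows: one must check carefully that the rescaled scores really preserve both orderings (no low-rank item overtakes a top-$k$ item at any cut-off) and that the numerator limits are evaluated correctly for all $k$ at once; the remaining numerator/denominator bookkeeping above is routine. (The scores are assumed strictly positive here; otherwise a zero score outside the top-$k$ set can make the infimum attainable at small $k$.)
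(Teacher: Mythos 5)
Your proof is correct, but it takes a genuinely different route from the paper's. The paper first invokes Proposition~\ref{app:pro:fuji:min} to argue that any minimizing pair must consist of two reversed orderings, then writes explicit closed-form expressions for every $f_k$ (split into the cases $k\leq n/2$ and $k>n/2$) in terms of sums $G(\ranking[t],a,b,c)=\frac{1}{t_a}\sum_{i=b}^{c}t_i$, and finally constructs score sequences recursively so that all these $G$-terms are simultaneously at most $\varepsilon$ while remaining strictly positive. You instead prove a clean structural decomposition --- the denominator equals $|\topset{r}{k}\cup\topset{s}{k}|$ and the numerator equals $|\topset{r}{k}\cap\topset{s}{k}|$ plus a strictly positive ``fuzzy excess'' whenever the top-$k$ sets differ --- and then beat any candidate pair by a local perturbation that shrinks only the tail scores at the single cut-off $k$ in question. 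This is more elementary and self-contained: it needs neither Proposition~\ref{app:pro:fuji:min} nor the explicit formulas, and it makes the inequality $\fuji{}\geq\jaccard{}$ and the non-attainment of the infimum transparent. What the paper's heavier machinery buys is that its construction controls all $k$ at once (which is what the $\auc{}_{\fuji{}}$ corollary and the subsequent theorem actually need); you recover this separately with the geometric-score construction, which is indeed essentially the $\alpha$-progression used in the paper's proof of the theorem. One further point in your favour: your parenthetical about strictly positive scores is not pedantry --- the paper's stated assumption is only that scores are distinct and non-negative, and with a zero score permitted outside the top set the infimum \emph{is} attained at $k=1$; the paper's proof silently assumes $r_n>0$ and $s_1>0$, so flagging this hypothesis explicitly is an improvement.
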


\begin{proof}
If there is a minimizing pair of rankings $\ranking[r] = (r_1, \dots, r_n)$ and $\ranking[s] = (s_1, \dots, s_n)$,
from Proposition \ref{app:pro:fuji:min} follows that
$r_1 > r_2 > \cdots > r_n > 0$ and $0 < s_1 < s_2 < \cdots < s_n$. We prove the claim for even values of $n$, since the odd-$n$ case can be proven in the same manner.
For a more compact notation, we introduce the function $G$ with four arguments: ranking $\ranking[t]$ and three indices $a$, $b$ and $c$. It is defined as
$$
G(\ranking[t], a, b, c) = \frac{1}{t_a}\sum_{i = b}^{c} t_i\text{.}
$$
If we explicitly compute the scores $f_k = \fuji{}(\ranking[r], \ranking[s], k)$, for $1\leq k < n$, we obtain
\begin{equation*}
    \label{app:eqn:fuji:argmin}
    f_k =
    \begin{cases}
        \hphantom{iiii}\frac{G(\ranking[s], n - k + 1, 1, k) + G(\ranking[r], k, n - k + 1, n)}{2k} &;\; k\leq \frac{n}{2}\\
        \frac{2k - n + G(\ranking[s], n - k + 1, 1, n - k) + G(\ranking[r], k, k + 1, n)}{n} &;\; k > \frac{n}{2}
    \end{cases}\text{.}
\end{equation*}

Since $G(\ranking[t], a, b, c) > 0$, the goal is to show that all values of $G$ that appear above can be arbitrarily close to $0$.
Note first that every index $a$ appears at most once for each of the rankings $\ranking[r]$ and $\ranking[s]$. Therefore, we can choose $\varepsilon > 0$ and achieve 
$\varepsilon \geq G(\ranking[t], a, b, c)$ for all appearing combinations $(\ranking[t], a, b, c)$ by recursively defining the ranking scores $s_i$ and $r_i$ as follows.
First, we set $s_1 = 1$. Then, for $k = n - 1, \dots, n/2 + 1$, we express $s_{n - k + 1}$ from $\varepsilon = G(\ranking[s], n - k + 1, 1, n - k)$.
By doing so, it is assured that $s_1 < \cdots < s_{n / 2}$. For $k \leq n / 2$ one proceeds similarly, however, $\varepsilon = G(\ranking[s], n - k + 1, 1, k)$ now returns only a candidate value for $s_{n- k + 1}$.
This may need to to be increased, to assure $s_{n/2} < \cdots < s_{n}$. Similarly, we construct the values $r_i$.
\end{proof}

This proof gives us an idea that leads to the final result that shows that \jaccard{} is only a limit case of \fuji{}:
\begin{theorem}
    For every $\varepsilon >0$ and any two orderings of items $x_i$ (defined by two permutations $\pi$ and $\tau$),
    there exist rankings $\ranking[r] = (r_1, \dots, r_n)$ and $\ranking[s] = (s_1, \dots, s_n)$ with the following properties:
    i) $\ranking[r]$ and $\ranking[s]$ respectively induce the same ordering of items as $\pi$ and $\tau$,
    and ii) $|\jaccard{}(\ranking[r], \ranking[s], k) - \fuji{}(\ranking[r], \ranking[s], k)| \leq \varepsilon$, for $1\leq k \leq n$.
\end{theorem}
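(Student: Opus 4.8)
The plan is to pin down exactly where \fuji{} and \jaccard{} differ and then to exploit the freedom in choosing the ranking scores — while keeping the prescribed orders — to make that difference uniformly small, reusing the geometric idea from the proof of Proposition~\ref{app:pro:fuji:argmin}. First I would establish the algebraic identity behind the whole statement. Both scores sum over the same crisp index set $\topset{r}{k}\cup\topset{s}{k}$; on the intersection $\mu^F_{\topset{r}{k}}=\mu^F_{\topset{s}{k}}=1$, exactly as for the crisp membership, and items outside the union are never summed. Hence the two denominators are literally equal (both are $|\topset{r}{k}\cup\topset{s}{k}|$), and the only discrepancy comes from the symmetric difference: for $x_i\in\topset{r}{k}\setminus\topset{s}{k}$ the crisp $\min$ is $0$ whereas the fuzzy $\min$ is $s_i/s_{(k)}$, and symmetrically on $\topset{s}{k}\setminus\topset{r}{k}$. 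This gives
\[
\fuji{}(\ranking[r],\ranking[s],k)-\jaccard{}(\ranking[r],\ranking[s],k)=\frac{1}{|\topset{r}{k}\cup\topset{s}{k}|}\Biggl(\sum_{x_i\in\topset{r}{k}\setminus\topset{s}{k}}\frac{s_i}{s_{(k)}}+\sum_{x_i\in\topset{s}{k}\setminus\topset{r}{k}}\frac{r_i}{r_{(k)}}\Biggr)\ \ge\ 0,
\]
which in passing re-derives the inequality $\fuji{}\ge\jaccard{}$ quoted above.

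Next I would exhibit the rankings. Fix a parameter $M>1$, to be chosen last; if $\pi$ and $\tau$ order the items as $x_{\pi(1)},x_{\pi(2)},\dots$ and $x_{\tau(1)},x_{\tau(2)},\dots$, set $r_{\pi(j)}=M^{\,n-j}$ and $s_{\tau(j)}=M^{\,n-j}$ for $j=1,\dots,n$. These are distinct, positive, and strictly decreasing along $\pi$ (resp.\ $\tau$), so $\ranking[r]$ and $\ranking[s]$ induce the orderings $\pi$ and $\tau$ — property~(i) — and since all scores are positive the ``otherwise'' branch of $\mu^F$ never activates. For $k<n$ and $x_i\in\topset{r}{k}\setminus\topset{s}{k}$ one has $\rank{s}(x_i)\ge k+1$, hence $s_i\le s_{(k+1)}$ and $s_i/s_{(k)}\le s_{(k+1)}/s_{(k)}=1/M$; symmetrically $r_i/r_{(k)}\le 1/M$ on $\topset{s}{k}\setminus\topset{r}{k}$. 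The two sums above therefore run over $|\topset{r}{k}\setminus\topset{s}{k}|+|\topset{s}{k}\setminus\topset{r}{k}|$ terms, each at most $1/M$, while the denominator $|\topset{r}{k}\cup\topset{s}{k}|$ is at least that same count; so the gap is at most $1/M$ (and is exactly $0$ when the symmetric difference is empty). For $k=n$ both top-sets equal $\{x_1,\dots,x_n\}$ and $\fuji{}=\jaccard{}=1$. Choosing $M>1/\varepsilon$ then yields property~(ii) for all $1\le k\le n$.

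I do not expect a genuine obstacle: once the identity of the first step is in hand, the estimate is a one-line consequence that is moreover independent of $n$, and the geometric choice of scores is essentially the $G$-minimisation construction already used for Proposition~\ref{app:pro:fuji:argmin}. The one place requiring care is that first step — verifying that the \fuji{} and \jaccard{} denominators are the very same set cardinality, that only symmetric-difference items contribute to the numerator gap, and that the edge cases (empty symmetric difference, the value $k=n$, and the inactive ``otherwise'' branch of $\mu^F$) are handled correctly — after which the bound follows immediately.
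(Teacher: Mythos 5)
Your proof is correct and takes essentially the same approach as the paper's: both choose geometrically progressing scores (your ratio $1/M$ is the paper's $\alpha$) so that the fuzzy membership of any item outside a top-$k$ set is at most $1/M$. Your write-up is in fact more explicit than the paper's sketch, since you first derive the exact identity for $\fuji{}-\jaccard{}$ (equal denominators, gap supported on the symmetric difference) and then obtain the uniform bound $1/M$ for all $k$ and $n$.
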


\begin{proof}
    \fuji{} will be arbitrarily close to \jaccard{} when the additional terms that appear in \fuji{} computation (such as the $G$ values from the previous proof)
    go to $0$. In order to achieve that, the \fuji{} membership function $\mu^F$ (Eq.~\eqref{app:eqn:fuzzy-mu}) should be close to $0$ for all items
    that are not in the intersection of the two top-ranked items sets. Suppose $x_{\pi(1)}$ and $x_{\tau(1)}$ are the top-ranked items in the rankings $\ranking[r]$ and $\ranking[s]$
    respectively.
    Then, this can be achieved, for example, by setting $r_{\pi(1)} = s_{\tau(1)} = 1$ and defining geometrically progressing scores $r_{\pi(i + 1)} = \alpha r_{\pi(i)}$ and $s_{\tau(i + 1)} = \alpha r_{\tau(i)}$
    for some $\alpha$ (depending on $\varepsilon$) small enough. 
\end{proof}

\section{Properties of Correlation}
We have to show that correlation is not \textbf{fully defined} and does not possess the \textbf{maximum} property,
but that it possesses the properties \textbf{bounded}, \textbf{correction for chance} and \textbf{considers rank scores}.

The last property is obvious, so we will only give comments on the others.

Since the correlation can be computed only between two lists of the same size, it is not fully defined. Its boundedness is also well known:
it takes values from the interval $[-1, 1]$.

Since the value $1$ is achieved if and only if the two samples at hand are positively linearly dependent,
the same order of the items in the lists is not sufficient for achieving the maximal value. Thus, correlation does not possess maximum property.

To show that it possesses the correction for chance, we take a similar approach to that when we discussed this property for \fuji{} in the Section Properties.

Let $c(\ranking{}, \ranking[s]{})$ denote the correlation between two rankings. We will compute the expected value
$\mathbb{E}_{\ranking{}, \ranking[s]{}, \pi}[c(\ranking{}, \pi(\ranking[s]{}))]$, where the rankings $\ranking{}$ and $\ranking[s]{}$ are 
such that $r_1 < r_2 <\cdots < r_n$ and $s_1 < s_2 <\cdots < s_n$.

Note that
$$
\mathbb{E}_{\ranking{}, \ranking[s]{}, \pi}[c(\ranking{}, \pi(\ranking[s]{}))] = \mathbb{E}_{\ranking{}, \ranking[s]{}}\left[\mathbb{E}_\pi[c(\ranking{}, \pi(\ranking[s]{})) \mid \ranking{}, \ranking[s]{}]\right].
$$
Since we do not know the distribution of the scores $\ranking{}$ and $\ranking[s]{}$, we rather show that the conditional expectations over permutations equal $0$.
It suffices to show that $$f(\ranking{}, \ranking[s]{}, \pi)  = \sum_\pi \sum_i (r_i - \bar{r})(s_{\pi(i)} - \bar{s}) = 0$$
where $\bar{r}$ and $\bar{s}$ are the average values of the corresponding scores. Indeed,
\begin{eqnarray*}
f(\ranking{}, \ranking[s]{}, \pi) &=& \sum_\pi \sum_i (r_i - \bar{r})(s_{\pi(i)} - \bar{s}) \\
                                  &=& \sum_i \sum_\pi (r_i - \bar{r})(s_{\pi(i)} - \bar{s}) \\
                                  &=& \sum_i (r_i - \bar{r}) \sum_\pi (s_{\pi(i)} - \bar{s}) \\
                                  &=& \sum_i (r_i - \bar{r}) \left(\left[(n - 1)! \sum_j s_j\right] - n! \cdot \bar{s} \right) \\
                                  &=& \sum_i (r_i - \bar{r}) \cdot 0\\
                                  &=& 0.
\end{eqnarray*}

\clearpage

\section{\fuji{} algorithm }

We first prove its correctness and then analyze its time complexity.

\begin{algorithm} 
    \caption{$\mathit{SimilarityCurve}(\ranking[r], \ranking[s])$}\label{app:alg:curves}
     \begin{algorithmic}[1]
        \STATE{$\bm{\mathit{xs}}_1 = $ items, sorted decreasingly w.r.t.~$\ranking[r]$}
        \STATE{$\bm{\mathit{xs}}_2 = $ items, sorted decreasingly w.r.t.~$\ranking[s]$}
        \STATE{$D = \emptyset$} \COMMENT{symmetric difference of $\topset{r_1}{k}$ and $\topset{r}{k}$}
        \STATE{$I = \emptyset$} \COMMENT{intersection of $\topset{r}{k}$ and $\topset{s}{k}$}
        \STATE{$\mathit{curve} = $ list of length $n$, initilized with $1$-s}
        \FOR{$k = 1, 2, \dots, n - 1$}
            \IF{$ r_{(k)} = s_{(k)} = 0$}\label{app:alg:curves:break}
                \STATE{{\bf break}}
            \ENDIF
            \FOR{$\bm{\mathit{xs}}\in \{\bm{\mathit{xs}}_1, \bm{\mathit{xs}}_2\}$}
                \STATE{$x = \bm{\mathit{xs}}[k]$}
                \IF{$x\notin D$}
                    \STATE{add $x$ to $D$}\label{app:alg:curves:sym-diff}
                \ELSE
                    \STATE{move $x$ from $D$ to $I$} \label{app:alg:curves:intersection}
                \ENDIF
            \ENDFOR
            \STATE{$\mathit{sizeI} = |I|$}
            \FOR{$x \in D$}\label{app:alg:curves:omitFirst}
                \STATE{$m_1 =$ compute $\mu_{\topset{r}{k}}(x)$ via Eq.~\eqref{app:eqn:fuzzy-mu}}
                \STATE{$m_2 =$ compute $\mu_{\topset{s}{k}}(x)$ via Eq.~\eqref{app:eqn:fuzzy-mu}}
                \STATE{$\mathit{sizeI}\, +\!= \min\{m_1, m_2\}$}\label{app:alg:curves:omitLast}
            \ENDFOR
            \STATE{$\mathit{curve}[k] = \mathit{sizeI} / (|I| + |D|)$}
        \ENDFOR
    \STATE{\textbf{return} curve}
    \end{algorithmic}
\end{algorithm}

\textbf{Correctness.} Given $\topset{r}{n} = \topset{s}{n}$ for any two rankings, $\fuji{}(\ranking[r], \ranking[s], n) = \jaccard{}(\ranking[r], \ranking[s],  n) = 1$. Thus, the outer for loop's upper bound can be $n - 1$. 

Moreover, one can interrupt the iteration even earlier (line 8). 
In particular, if there exists a $k$, where the lowest relevance score in both sets $\topset{r}{k}$ and $\topset{s}{k}$ (namely $r_{(k)}$ and $s_{(k)}$) is $0$, then also all the items
$x_i \notin \topset{r}{k}\cap\topset{s}{k}$ have score $0$. This means that the membership function from Eq.~\eqref{app:eqn:fuzzy-mu} will result in $1$ for all $x_i \in \topset{r}{k}\cup\topset{s}{k}$,
hence $\mathit{curve}[k] = 1$, yielding the rest of the computation redundant.

Otherwise, the algorithm proceeds as follows. For both ranking sets, one takes the $k$-th ranked item $x$ from both sets. If $x$ has not been considered before in the previous iterations, i.e., have not been ranked better by any of the rankings, is assigned to the symmetric difference $D$ of the sets $\topset{r}{k}$ and $\topset{s}{k}$ (line 
13)
Otherwise, since it has been previously assigned $x\in D$, it is moved
to the intersection $I$ of these two sets (line 15). 

After updating the sets $D$ and $I$, one proceeds to computing the size of $\topset{r}{k}\cap\topset{s}{k}$ and $\topset{r}{k}\cup\topset{s}{k}$. Since $\topset{r}{k}\cup\topset{s}{k}$ is a disjunctive union of $D$ and $I$,
the latter is computed as $|\topset{r}{k}\cup\topset{s}{k}| = |D| + |I|$. To compute the final score, one has to first compute how much do the items $x\in D$ additionally contribute to the \emph{sizeI} of the intersection $I$ (Eq.~\eqref{app:eqn:fuzzy-mu}). The correctness of the algorithm follows, since $\max \{\mu_{\topset{r}{k}}^F(x), \mu_{\topset{s}{k}}^F(x) \} = 1$.

Alg.~\ref{app:alg:curves} can be also utilized for computing the curve in the case of \jaccard{} by skipping the lines 19--23. 

\textbf{Time complexity.} The algorithm first sorts items according to the rankings at its input. This has $\mathcal{O}(n\log n)$ complexity. Computing the \fuji{} curves, in the worst-case, needs $\mathcal{O}(n^2)$ time. However, in practice, such scenarios can be avoided when the early stopping criterion in line 8 
applies, and by optimizing the computation of the intersection size and operating only with $I$ and $D$ rather than with $\topset{r}{k}$ and $\topset{s}{k}$ (thus avoiding to compute $\mu$ values equal to $1$). 

Moreover, additional optimization of the algorithm, without any particular loss of performance, can be also achieved by
computing the values $\mathit{curve}[k]$ only for a subset of $\{1, \dots, n - 1\}$ which is more dense at smaller values.
Such an optimization may even be preferred in practice, since one is mostly interested in the top-ranked items and typically scores such as  $\fuji{}(\ranking[r], \ranking[s], k)$ and $\fuji{}(\ranking[r], \ranking[s], k + 1)$ are quite similar for large $k$. Moreover, computing $\auc{}$ (Eq.~\eqref{app:eqn:auc}), while skipping some values of $k$, effectively gives higher weight to the top of the ranking.

\newpage
\section{Real-world experiments}

\begin{figure}[!htb]
\centering
\includegraphics[width=0.5\textwidth]{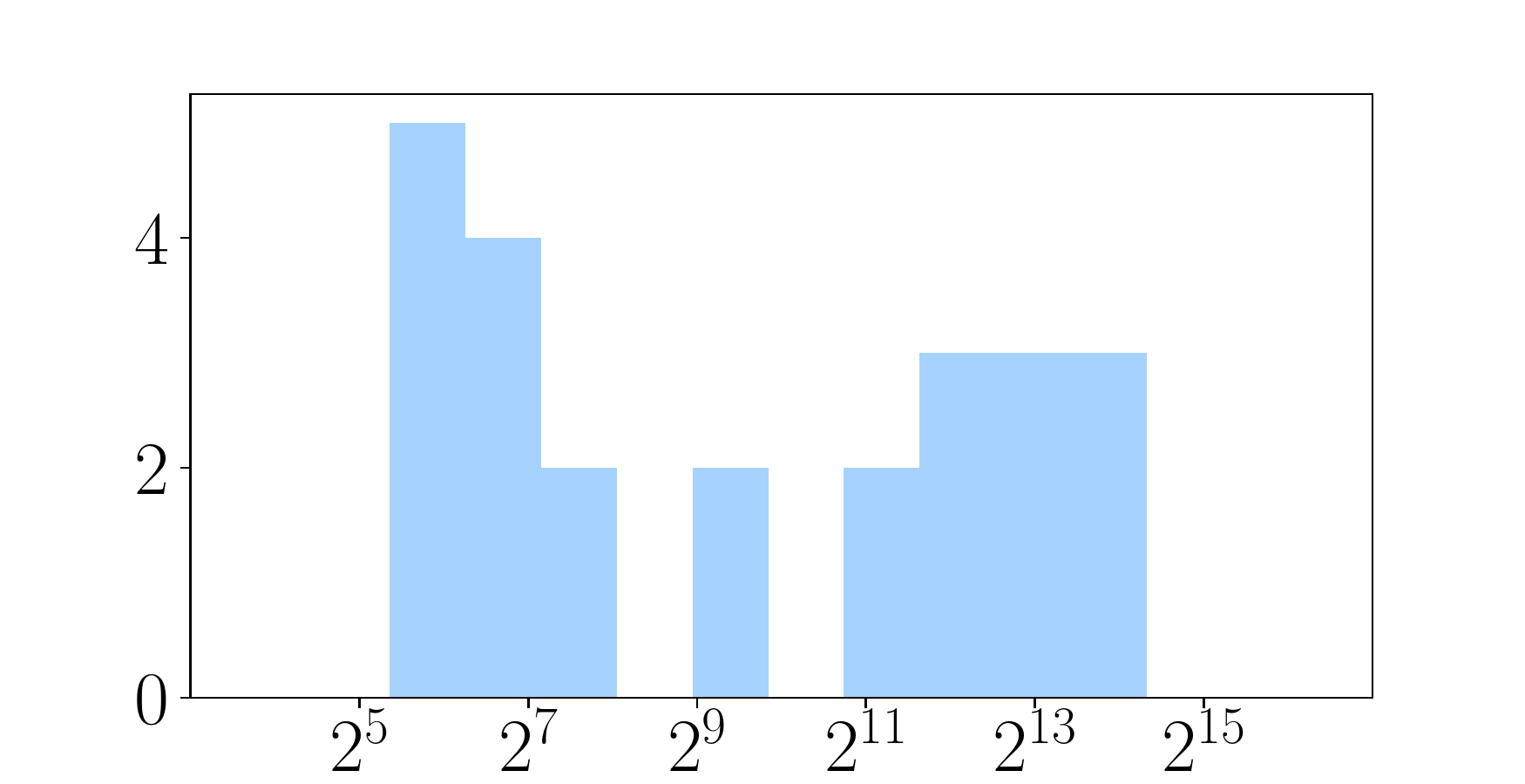}
\caption{Number of data sets in this study (y-axis) according to the number of features (x-axis).}
\label{app:fig:n-features}
\end{figure}

\begin{longtable}{l| r r |p{7cm}}
    \caption{Datasets used in this work.}
    \label{app:tab:data1}\\
    \hline
    Name & Features & Examples & Description  \\ \hline
        \hline \hline
APS failure  \cite{aps}                          & 171  & 76000& APS failure prediction \\
biodegredability \cite{biodeg-p2}           & 62   & 328  & Biodegradability of commercial compounds \\  
bladder \cite{dyrskjot2003identifying}           & 5725 & 40   & Identifying distinct classes of bladder carcinoma using microarrays \\
childhood  \cite{cheok2003treatment}             & 8281 & 110  & Treatment-specific changes in gene expression discriminate in vivo drug response in human leukemia cells. \\
cmlTreatment  \cite{crossman2005chronic}         & 12626& 28   & Identifying patients with chronic myeloid leukemia that do not respond to standard imatinib treatment. \\
coil2000  \cite{van2004bias}                     & 86   & 9822 & The Insurance Company Case.  \\
colon-cancer  \cite{alon1999broad}               & 2001 & 62   & Broad patterns of gene expression revealed by clustering analysis of tumor and normal colon tissues probed by oligonucleotide arrays. \\
digits  \cite{xu1992methods}                     & 65   & 1797 & Hand-written digit recognition \\
diversity  \cite{dvzeroski1998machine}           & 87   & 292  & Machine learning applications in biological classification of river water quality \\
dlbcl  \cite{dlbcl}                              & 7071 & 77   & The gene-expression based distinguishing between Diffuse large B-cell lymphomas (DLBCL) and follicular lymphomas (FL). \\
gas drift  \cite{vergara2012chemical}            & 129  & 13910& Chemical gas sensor drift compensation using classifier ensembles, Sensors and Actuators \\
genes  \cite{weinstein2013cancer}                & 20532& 801  & This collection of data is part of the RNA-Seq (HiSeq) PANCAN dataset. \\
leukemia  \cite{golub1999molecular}              & 5148 & 72   & Molecular classification of cancer: class discovery and class prediction by gene expression monitoring. \\
madelon  \cite{guyon2008feature}                 & 501  & 2000 & Feature Extraction, Foundations and Applications. Studies in Fuzziness and Soft Computing \\
mll  \cite{armstrong2002mll}                     & 12534& 72   & MLL translocations specify a distinct gene expression profile that distinguishes a unique leukemia\\
optdigits  \cite{optdigits}                      & 63   & 5620 & Optical Recognition of Handwritten Digits \\
OVA-Breast  \cite{stiglic2010stability}          & 10937& 1545 & Stability of ranked gene lists in large microarray analysis studies \\
p-gp  \cite{levatic2013accurate}                 & 184  & 932  & P-gp drug recognition induced from a cancer cell line cytotoxicity screen.\\
p53  \cite{danziger2009predicting}               & 5409 & 31420& Predicting Positive p53 Cancer Rescue Regions Using Most Informative Positive (MIP) Active Learning \\
pd-speech  \cite{sakar2013collection}            & 754  & 756  & Parkinson disease prediction from speech features \\
QSAR degradation  \cite{mansouri2013quantitative}& 42   & 1055 &  Quantitative Structure - Activity Relationship models for ready biodegradability of chemicals \\
sonar  \cite{gorman1988analysis}                 & 61   & 208  & Analysis of Hidden Units in a Layered Network Trained to Classify Sonar Targets \\
srbct  \cite{khan2001classification}             & 2309 & 83   & Classification and diagnostic prediction of cancers using gene expression profiling and artificial neural networks \\
water-all  \cite{dvzeroski1998machine}           & 81   & 292  & Machine learning applications in biological classification of river water quality \\
\end{longtable}

\clearpage
\figureBlock{aps-failure}
\figureBlock{biodeg-p2-discrete}
\figureBlock{bladderCancer}
\figureBlock{childhoodAll}
\figureBlock{cmlTreatment}
\figureBlock{coil2000}
\figureBlock{colon}
\figureBlock{digits}
\figureBlock{diversity-all}
\figureBlock{dlbcl}
\figureBlock{gasdrift}
\figureBlock{genes}
\figureBlock{leukemia}
\figureBlock{madelon}
\figureBlock{mll}
\figureBlock{optdigits}
\figureBlock{OVA-Breast}
\figureBlock{pgp}
\figureBlock{p53}
\figureBlock{pd-speech-features}
\figureBlock{QSAR-biodegradation}
\figureBlock{sonar}
\figureBlock{srbct}
\figureBlock{water-all}

\clearpage

\end{document}